
\documentclass{article}

\usepackage[icml2018]{shortcut}

\def\Bibfile{../../library}
\graphicspath{{../../figures/}}

\usepackage{xr}
\externaldocument{/home/tom/.cache/gummi/.A_supplementary.tex}
\def\crossref{\externaldocument{/home/tom/.cache/gummi/.0_main_icml2018.tex}}
\def\biblio{
	\bibliographystyle{\Bibstyle}
	\bibliography{\Bibfile}
	\def\biblio{}
}

\icmltitlerunning{DICOD: Distributed Convolutional Sparse Coding}


\begin{document}

\def\biblio{}
\def\CHANGES{}
\def\crossref{}

\twocolumn[
\icmltitle{DICOD: Distributed Convolutional Coordinate Descent for\\Convolutional  Sparse Coding}



\icmlsetsymbol{equal}{*}

\begin{icmlauthorlist}
\icmlauthor{Moreau Thomas}{cmla}
\icmlauthor{Oudre Laurent}{P13}
\icmlauthor{Vayatis Nicolas}{cmla}
\end{icmlauthorlist}

\icmlaffiliation{cmla}{CMLA, ENS Paris-Saclay, Université Paris-Saclay, Cachan, France}
\icmlaffiliation{P13}{L2TI, Université Paris 13, Villetaneuse, France}

\icmlcorrespondingauthor{Moreau Thomas}{thomas.moreau@cmla.ens-cachan.fr}

\icmlkeywords{Machine Learning, ICML}

\vskip 0.3in
]



\printAffiliationsAndNotice{}  

\crossref{}
\begin{abstract}

In this paper, we introduce DICOD, a convolutional sparse coding algorithm which builds shift invariant representations for long signals. This algorithm is designed to run in a distributed setting, with local message passing, making it communication efficient. It is based on coordinate descent and uses locally greedy updates which accelerate the resolution compared to greedy coordinate selection. We prove the convergence of this algorithm and highlight its computational speed-up which is super-linear in the number of cores used. We also provide empirical evidence for the acceleration properties of our algorithm compared to state-of-the-art methods.

\end{abstract} 

\section{Convolutional Representation for Long Signals}
\label{sec:dicod:context}

Sparse coding aims at building sparse linear representations of a data set based on a dictionary of basic elements called atoms. It has proven to be useful in many applications, ranging from EEG analysis to images and audio processing \citep{Adler2013, Kavukcuoglu2013, Mairal2010, Grosse2007}. Convolutional sparse coding is a specialization of this approach, focused on building sparse, shift-invariant representations of signals. Such representations present a major interest for applications like segmentation or classification as they separate the shape and the localization of patterns in a signal. \CHANGES{This is typically the case for physiological signals which can be composed of recurrent patterns linked to specific behavior in the human body such as the characteristic heartbeat pattern in ECG recordings.} Depending on the context, the dictionary can either be fixed analytically (\emph{e.g.} wavelets, see \citealt{Mallat2008}), or learned from the data \citep{Bristow2013, Mairal2010}.

Several algorithms have been proposed to solve the convolutional sparse coding. The Fast Iterative Soft-Thresholding Algorithm (FISTA) was adapted for convolutional problems in \citet{Chalasani2013} and uses proximal gradient descent to compute the representation. The Feature Sign Search (FSS), introduced in \citet{Grosse2007}, solves at each step a quadratic subproblem for an active set of the estimated nonzero coefficients and the Fast Convolutional Sparse Coding (FCSC) of \citet{Bristow2013} is based on Alternating Direction Method of Multipliers (ADMM). Finally, the coordinate descent (CD) has been extended by \citet{Kavukcuoglu2013} to solve the convolutional sparse coding. This method greedily optimizes one coordinate at each iteration using fast local updates. We refer the reader to \citet{Wohlberg2016} for a detailed presentation of these algorithms.

To our knowledge, there is no scalable version of these algorithms for long signals. This is a typical situation, for instance, in physiological signal processing where sensor information can be collected for a few hours with sampling frequencies ranging from 100 to $1000$Hz.
\CHANGES{The existing algorithms for generic $\ell_1$-regularized optimization can be accelerated by improving the computational complexity of each iteration.} A first approach to improve the complexity of these algorithms is to estimate the non-zero coefficients of the optimal solution to reduce the dimension of the optimization space, using either screening \citep{Ghaoui2012,Fercoq2015} or active-set algorithms \citep{Johnson2015}. Another possibility is to develop parallel algorithms which compute multiple updates simultaneously. Recent studies have considered distributing coordinate descent algorithms for general $\ell_1$-regularized minimization \citep{Scherrer2012a, Scherrer2012, Bradley2011, Yu2012}. These papers propose synchronous algorithms using either locks or synchronizing steps to ensure the convergence in general cases. \citet{You2016} derive an asynchronous distributed algorithm for the projected coordinate descent which uses centralized communication and finely tuned step size to ensure the convergence of their method.


In the present paper, we design a novel distributed algorithm tailored for the convolutional problem which is based on coordinate descent, named Distributed Convolution Coordinate Descent (DICOD). DICOD is asynchronous and each process can run independently without locks or synchronization steps. This algorithm uses a local communication scheme to reduce the number messages between the processes and does not rely on external learning rates. We also prove that this algorithm scales super-linearly with the number of cores compared to the sequential CD, up to certain limitations.

In \autoref{sec:DCP}, we introduce the DICOD algorithm for the resolution of convolutional sparse coding. Then, we prove in \autoref{sec:analysis} that DICOD converges to the optimal solution for a wide range of settings and we analyze its complexity. Finally, \autoref{sec:result} presents numerical experiments that illustrate the benefits of the DICOD algorithm with respect to other state-of-the-art algorithms and validate our theoretical analysis.


\crossref{}


\begin{figure*}[t]
	 \begin{minipage}[t]{.49\textwidth}
	 	\begin{algorithm}[H]
			\begin{algorithmic}[1]

\STATE \textbf{Input: }$\pmb D, X$, parameter $\epsilon >  0$
\STATE $\mathcal C = \llbracket1, K\rrbracket\times\llbracket0, L-1\rrbracket$ 
\STATE Initialization: $\forall (k,t) \in \mathcal C,~~ $  \\
 $Z_k[t] = 0,~~\beta_k[t] = \left(\widetilde{\pmb D_k} * X\right)[t]$
\REPEAT
\STATE $\forall(k,t) \in \mathcal C,~\displaystyle Z'_k[t] = \frac{1}{\|\pmb D_k\|_2^2}\text{Sh}(\beta_k[t], \lambda)~,$
\STATE Choose $\displaystyle(k_0, t_0) = \arg\max_{(k, t)\in\mathcal C} |\Delta Z_k[t]|$
\STATE Update $\beta$ using \autoref{eq:beta_up} and $Z_{k_0}[t_0] \leftarrow Z'_{k_0}[t_0]$
\UNTIL{$|\Delta Z_{k_0}[t_0]| < \epsilon$}

			\end{algorithmic}
			\caption{Greedy Coordinate Descent} 
			\label{alg:cd} 
		\end{algorithm}
	 \end{minipage}%
	\hfill
	 \begin{minipage}[t]{.5\textwidth}
		\begin{algorithm}[H]
		\begin{algorithmic}[1]

\STATE \textbf{Input: }$\pmb D, X$, parameter $\epsilon >  0$
\STATE \textbf{In parallel } for $m=1\cdots M$
\STATE For all $(k,t)$ in $\mathcal C_m$, initialize $\beta_k[t]$ and $Z_k[t]$%
\REPEAT
\STATE Receive messages and update $\beta$ with \autoref{eq:beta_up}%
\STATE  $\forall (k, t) \in \mathcal C_m$, compute $Z'_k[t]$ with \autoref{eq:pb_coord} 
\STATE Choose $\displaystyle(k_0, t_0) = \arg\max_{(k, t)\in\mathcal C_m} |\Delta Z_k[t]|$
\STATE Update $\beta$ with \autoref{eq:beta_up} and $Z_{k_0}[t_0]\leftarrow{}Z'_{k_0}[t_0]$%
\STATE \textbf{if}{ $t_0 - mL_M < W$ \textbf{then}}
\STATE ~~~~Send $(k_0, t_0, \Delta Z_{k_0}[t_0])$ to core $m-1$ 
\STATE \textbf{if}{ $(m+1)L_M - t_0< W$ \textbf{then}}
\STATE ~~~~Send $(k_0, t_0, \Delta Z_{k_0}[t_0])$ to core $m+1$ 
\UNTIL{for all cores, $|\Delta Z_{k_0}[t_0]| < \epsilon$ }
		
		\end{algorithmic}
		\caption{DICOD$_M$} 
		\label{alg:dicod} 
		\end{algorithm}
	 \end{minipage}
\end{figure*}

\section{Distributed Convolutional Coordinate Descent (DICOD)}
\label{sec:DCP}

\paragraph{Notations.}
\label{par:contrib:notation}

The space of multivariate signals of length $T$ in $\Rset^P$ is denoted by $\Xset_T^P$~. For these signals, their value at time $t \in \llbracket 0, T-1\rrbracket$ is denoted by $ X[t] \in \Rset^{P}$ and for all $t \notin\llbracket 0, T-1\rrbracket$, $X[t] = \pmb 0_P$. The indicator function of $t_0$ is denoted $\pmb 1_{t_0}$. For any signal $X \in \Xset_T^P$, the reversed signal is defined as $\widetilde X[t] = X[T-t]$, the d-norm is defined as \mbox{$\|X\|_d = \left(\sum_{t=0}^{T-1}\|X[t]\|_d^d\right)^{1/_d}$} and the replacement operator as $\Phi_{t_0}(X)[t] = (1 - \pmb 1_{t_0}(t))X[t]~,$ which replaces the value at time $t_0$ by $0$.  Finally, for $L, W \in \mathbb N^*$, the convolution between $Z \in \Xset_L^1$ and $\pmb D \in \Xset_W^P$ is a multivariate signal
$Z\ast \pmb D \in \Xset_T^P$ with $\smeq T = L + W - 1$ such that for $t \in \llbracket 0, T-1\rrbracket$,
\[
	(Z \ast \pmb D)[t] \overset{\Delta}{=}  \sum_{\tau=0}^{W-1}Z[t-\tau] \pmb D[\tau]~.
\]


This section reviews in \autoref{sub:problem} the convolutional sparse coding as an $\ell_1$-regularized optimization problem and the coordinate descent algorithm to solve it. \CHANGES{Then, \autoref{sub:dicod} and \autoref{sub:dicod:SeqDICOD} respectively introduce the Distributed Convolutional Coordinate Descent (DICOD)  and the Sequential DICOD (SeqDICOD) algorithms to efficiently solve convolutional sparse coding for long signals. Finally, \autoref{sub:related_distrib} discusses related work on $\ell_1$-regularized coordinate descent algorithms.
}

\subsection{Coordinate Descent for Convolutional Sparse Coding}
\label{sub:problem}

\paragraph{Convolutional Sparse Coding.}
\label{par:contrib:csc}

Consider the multivariate signal $X \in \Xset_T^P$. Let $\pmb D = \left \{\pmb D_k \right \}_{k=1}^K \subset \Xset_W^P$ be a set of $K$ patterns with  $W \ll T$ and $Z = \{Z_k\}_{k=1}^K \subset \Xset_L^1$ be a set of $K$ activation signals with $\smeq L = T-W+1$. The convolutional sparse representation models a multivariate signal $X$ as the sum of $K$ convolutions between a local pattern $\pmb D_k$ and an activation signal $Z_k$ such that:
\begin{equation}\label{eq:model}
	X[t] =  \sum_{k=1}^K (Z_k* \pmb D_{k})[t] + \mathcal{E}[t], ~~~~~~~\forall t\in \llbracket 0, T-1\rrbracket,
\end{equation}
with $\mathcal{E} \in \Xset_T^P$ representing an additive noise term. This model also assumes that the coding signals $Z_k$ are sparse, in the sense that only few entries are nonzero in each signal. The sparsity property forces the representation to display localized patterns in the signal. Note that this model can be extended to higher order signals such as images by using the proper convolution operator. In this study, we focus on 1D-convolution for the sake of simplicity.

Given a dictionary of patterns $\pmb D$, convolutional sparse coding aims to retrieve the sparse decomposition $Z^*$ associated to the signal $X$ by solving the following $\ell_1$-regularized optimization problem
\begin{align}
	\label{eq:sparse_code}
	Z^* = & \argmin_{Z=(Z_1, \ldots Z_K)} E(Z)~,~~~~~~~~~~~~~\text{where} \\
	E(Z) \overset{\Delta}{=} & \frac{1}{2} \left\|X - \sum_{k=1}^KZ_k* \pmb D_k\right\|_2^2 + \lambda\sum_{k=1}^K\left\|Z_k\right\|_1,
\end{align}
for a given regularization parameter $\lambda > 0$~. The problem formulation \autoref{eq:sparse_code} can be interpreted as a special case of the LASSO problem with a band circulant matrix. Therefore, classical optimization techniques designed for LASSO can easily be applied to solve it with the same convergence guarantees. \citet{Kavukcuoglu2013} adapted the coordinate descent to efficiently solve the convolutional sparse coding.


\paragraph{Convolutional Coordinate Descent.}
\label{par:contrib:cd}

The coordinate descent is a method which updates one coordinate at each iteration. This type of optimization algorithms is efficient for sparse optimization problem since few coefficients need to be updated to find the optimal solution and the greedy selection of updated coordinates is a good strategy to achieve fast convergence to the optimal point. \mref{alg:cd}{Algorithm} summarizes the greedy convolutional coordinate descent.

The method proposed by \citet{Kavukcuoglu2013} iteratively updates at each iteration one coordinate $(k_0, t_0)$ of the coding signal $Z$ to its optimal value $Z'_{k_0}[t_0]$ when all other coordinates are fixed. A closed form solution exists to compute the value $Z'_{k_0}[t_0]$ for the update,
\begin{equation} \label{eq:pb_coord}
	Z'_{k_0}[t_0]
		= \frac{1}{\|\pmb D_{k_0}\|_2^2}\text{Sh}(\beta_{k_0}[t_0], \lambda),
\end{equation}
with the soft thresholding operator defined as
\[
	\smeq\text{Sh}(u, \lambda) = \text{sign}(u)\max(|u| - \lambda, 0).
\]
and an auxiliary variable $\beta \in \Xset^K_L$ defined as
\[
	\smeq\displaystyle \beta_{k}[t] = \left(\widetilde{\pmb D_{k}}*\left(X-
		\sum_{\substack{k'=1\\k'\neq k}}^KZ_{k'}*\pmb D_{k'} - \Phi_{t}\left(Z_{k}\right)*\pmb D_{k}\right)\right)[t]~,
\]
Note that $\beta_k[t]$ is simply the residual when $Z_k[t]$ is equal to 0.

The success of this algorithm highly depends on the efficiency in computing this coordinate update. For problem \autoref{eq:sparse_code}, \citet{Kavukcuoglu2013} show that if at iteration $q$, the coefficient $(k_0, t_0)$ of $Z^{(q)}$ is updated to the value $Z'_{k_0}[t_0]$, then it is possible to compute $\beta^{(q+1)}$ from $\beta^{(q)}$ using
\begin{equation}\label{eq:beta_up}
	\beta_k^{(q+1)}[t] = \beta_k^{(q)}[t] -
		\mathcal S_{k, k_0}[t-t_0] \Delta Z^{(q)}_{k_0}[t_0],
\end{equation}
for all $(k, t) \neq (k_0, t_0)$ with \mbox{$\mathcal S_{k, l}[t] = (\widetilde{\pmb D_k} * \pmb D_l)[t]$}~. For all \mbox{$t \notin\llbracket-W+1, W-1\rrbracket$}, $\mathcal S[t]$ is zero. Thus, only \mbox{$\mathcal O(KW)$} operations are needed to maintain $\beta$ up-to-date with the current estimate $Z$. In the following,
\[
	\Delta E_{k_0}[t_0] = E(Z^{(q)}) - E(Z^{(q+1)})
\]
denotes the cost variation obtained when the coefficient $(k_0, t_0)$ is replaced by its optimal value $Z'_{k_0}[t_0]$.

The selection of the updated coordinate $(k_0, t_0)$ can follow different strategies. Cyclic updates \citep{Friedman2007} and random updates \citep{Shalev2009} are efficient strategies as they have a $\bO{1}$ computational complexity. \citet{Osher2009} propose to select the coordinate greedily to maximize the cost reduction of the update. In this case, the coordinate is chosen as the one with the largest difference $\max_{(k, t)}\left|\Delta Z_{k}[t]\right|$ between its current value $Z_k[t]$ and the value $Z'_{k}[t]$ with
\begin{equation}
	\Delta Z_k[t] = Z_k[t] - Z'_k[t]
\end{equation}
This strategy is computationally more expensive, with a cost of $\bO{KT}$ but it has a better convergence rate \citep{Nutini}. In this paper, we focus on the greedy approach as it aims to get the largest gain from each update. Moreover, as the updates in the greedy scheme are more complex to compute, distributing them provides a larger speedup compare to other strategies.

The procedure is run until $\max_{k, t}|\Delta Z_{k}[t]|$ becomes smaller than a specified tolerance parameter $\epsilon$.



\subsection{Distributed Convolutional Coordinate Descent (DICOD)}
\label{sub:dicod}

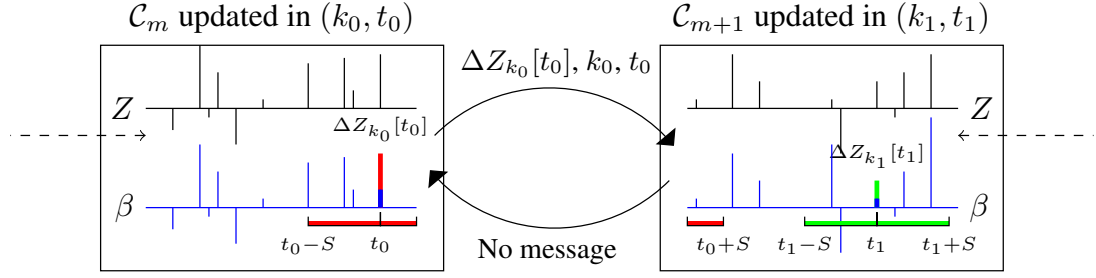
\begin{figure*}[tp]
\centering
\scalebox{1.2}{


\begin{tikzpicture}
\pgfmathsetmacro{\iO}{-1.9}
\pgfmathsetmacro{\ione}{3.6}
\pgfmathsetmacro{\heighl}{-0.8}
\pgfmathsetmacro{\heighz}{0.3}
\pgfmathsetmacro{\sep}{0.05}
\pgfmathsetmacro{\S}{0.8}
\pgfmathsetmacro{\r}{(-\iO-1.5)/\S}

\draw (-5, 1) -- (-1.2, 1) node[midway, above]{$\mathcal C_m$  updated in $(k_0, t_0)$} -- (-1.2, -1.5) -- (-5, -1.5) -- cycle;
\draw (5, 1) -- (1.2, 1) node[midway, above]{$\mathcal C_{m+1}$ updated in $(k_1, t_1)$} -- (1.2, -1.5) -- (5, -1.5) -- cycle;
\draw [<-, dashed] (-4.5, 0) -- (-6, 0);
\draw [<-, dashed] (4.5, 0) -- (6, 0);
\foreach \s in {-1, 1}
 \draw[blue] (\s*4.5, \heighl) -- (\s*1.5, \heighl) ;
\draw[green, line width = 1.5pt] (\ione, \heighl) -- ++(0, 0.3) node[above, black] {\tiny$\Delta Z_{k_1}[t_1]$};
\draw[red, line width = 1.5pt] (\iO, \heighl) -- ++(0, 0.6)  node[above, black] {\tiny$\Delta Z_{k_0}[t_0]$};
\foreach \x/\y in {2.3/0.3, 1.6/0.1, 2/0.6, 3.1/0.7, 4.2/1, 3.9/0.4, 3.8/-0.1, 3.2/-0.5, \ione/0.1}
    \draw[blue] (\x, \heighl) -- ++(0, \y) ;
\draw[blue, line width = 1.5pt] (\ione, \heighl) -- ++(0, .1) ;
\foreach \x/\y in {2.3/0.56, 2.2/0.2, 2.7/0.5, 3.7/0.4, 4.2/-0.24, 3.9/0.7, 3.8/-0.1, 3.2/0.1, 3.5/-0.4, -\iO/0.2}
    \draw[blue] (-\x, \heighl) -- ++(0, \y);
\draw[blue, line width = 1.5pt] (\iO, \heighl) -- ++(0, .2) ;
\fill[red] (\iO-\S, \heighl-3*\sep) -- ++(0, -\sep) -- ++(\S+\r*\S, 0) -- ++(0, \sep) -- cycle;
\draw (\iO+\r*\S, \heighl-2*\sep) -- ++(0, -2*\sep) -- ++ (-\r*\S, 0)  -- ++ (0, 0.15) -- ++ (0, -0.15) node [below] {\tiny $t_0$} -- ++(-0.8, 0) node[below, black] {\tiny \smeq$t_0 - S$} -- ++(0, +0.1);

\fill[red] (1.5, \heighl-3*\sep) -- ++(0, -\sep) -- ++(\S-\r*\S, 0) -- ++(0, \sep) -- cycle;
\draw (1.5, \heighl-2*\sep) -- ++(0, -0.1) -- ++(\S-\r*\S, 0) node[below, black] {\tiny  \smeq$t_0 + S$} -- ++(0, +0.1);
\fill[green] (\ione-\S, \heighl-3*\sep) -- ++(0, -\sep) -- ++(2*\S, 0) -- ++(0, \sep) -- cycle;
\draw (\ione-\S, \heighl-2*\sep) -- ++(0, -2*\sep) node [below] {\tiny \smeq$t_1-S$} -- ++(\S, 0) -- ++(0, 3*\sep)  -- ++(0, -3*\sep) node [below] {\tiny $t_1$}-- ++(\S, 0) node [below] { \smeq \tiny$t_1+S$} -- ++(0, 2*\sep);
\draw (-4.5, \heighl) node[left] {$\beta$};
\draw (4.5, \heighl) node[right] {$\beta$};

 
\draw[black] (4.5, \heighz) -- (1.5, \heighz) ;
\foreach \x/\y in {2.3/0.3, 1.6/0.1, 2/0.6, 3.1/0.1, 4.2/0.6, 3.9/0.4, 3.8/-0.1, 3.2/-0.5, \ione/0.3}
    \draw[black] (\x, \heighz) -- ++(0, \y) ;
\draw[black] (-4.5, \heighz) -- (-1.5, \heighz) ;
\foreach \x/\y in {2.3/0.56, 2.2/0.2, 2.7/0.5, 3.7/0.4, 4.2/-0.24, 3.9/0.7, 3.8/-0.1, 3.2/0.1, 3.5/-0.4, -\iO/0.6}
    \draw[black] (-\x, \heighz) -- ++(0, \y);
\draw (-4.5, \heighz) node[left] {$Z$};
\draw (4.5, \heighz) node[right] {$Z$};

\draw[-triangle 90] (-1.3, 0) arc (135:40:1.8) node[midway, above] {\small$\smeq\Delta Z_{k_0}[t_0]$, $k_0$, $t_0$};
\draw[-triangle 90] (1.3, -0.5) arc (135:40:-1.8) node[midway, below] {\small No message};
\end{tikzpicture}
}
\caption{
	Communication process in DICOD for two cores $\mathcal C_m$ and $\mathcal C_{m+1}$.
	(\emph{red}) The process needs to send a message to its neighbor as it updates a coefficient
	with $t_0$ located near the border of the core's segment, in the interference zone.
	(\emph{green}) The update in $t_1$ is independent of other cores.
}
\label{fig:DCP}
\end{figure*}

For convolutional sparse coding, the coordinate descent updates are only weakly dependent as it is shown in \autoref{eq:beta_up}. It is thus natural to parallelize it for this problem.

\paragraph{DICOD.}

\mref{alg:dicod}{Algorithm} describes the steps of DICOD with $M$ workers. Each worker $m \in \llbracket1, M\rrbracket$ is in charge of updating the coefficients  of a segment $\mathcal C_m$ of length \mbox{$L_M = L / M$} defined by:
\[
	\smeq \mathcal C_m = 
	\left \{ (k, t) ~;~~
				k \in \llbracket1, K\rrbracket, ~~
				t\in \left\llbracket(m-1)L_M, ~~mL_M -1 \right\rrbracket \right \}~.
\]
The local updates are performed in parallel for all the cores using the greedy coordinate descent introduced in \autoref{sub:problem}. When a core $m$ updates the coordinate $(k_0, t_0)$ such that $t_0 \in \llbracket (m-1)L_M +W, mL_M -W\rrbracket$, the updated coefficients of $\beta$ are all contained in $\mathcal C_m$ and there is no need to update $\beta$ on the other cores. In these cases, the update is equivalent to a sequential update. When \mbox{$\smeq t_0 \in \llbracket mL_M -W, mL_M \rrbracket$} \mbox{$\displaystyle\smeq \left(\text{resp. } t_0 \in  \llbracket(m-1)L_M, (m-1)L_M+W\rrbracket\right)$,} some of the coefficients of $\beta$ in core $m+1$ (resp. $m-1$) need to be updated and the update is not local anymore. This can be done by sending the position of updated coordinate $(k_0, t_0)$, and the value of the update $\Delta Z_{k_0}[t_0]$ to the neighboring core. \autoref{fig:DCP} illustrates this communication process. Inter-processes communications are very limited in DICOD. One node communicates with its neighbors only when it updates coefficients close to the extremity of its segment. When the size of the segment is reasonably large compared to the size of the patterns, only a small part of the iterations needs to send messages. We cannot apply the stopping criterion of CD in each worker of DICOD, as this criterion might not be reached globally. The updates in the neighbor cores can break this criterion. To avoid this issue, the convergence is considered to be reached once all the cores achieve this criterion simultaneously. Workers that reach this state locally are paused, waiting for incoming communication or for the global convergence to be reached.

The key point that allows distributing the convolutional coordinate descent algorithm is that the solutions on time segments that are not overlapping are only weakly dependent. Equation \autoref{eq:beta_up} shows that a local change has impact on a segment of length $2W-1$ centered around the updated coordinate. Thus, if two coordinates which are far enough were updated simultaneously, the resulting point $Z$ is the same as if these two coordinates had been updated sequentially. By splitting the signal into continuous segments over multiple cores, coordinates can be updated independently on each core up to certain limits.

\paragraph{Interferences.}
\label{par:interferences}

When two coefficients $(k_0, t_0)$ and $(k_1, t_1)$ are updated by two neighboring cores simultaneously, the updates might not be independent and cannot be considered to be sequential. The local version of $\beta$ used for the second update does not account for the first update. We say that the updates are \textit{interfering}. The cost reduction resulting from these two updates is denoted $\Delta E_{k_0, k_1}[t_0, t_1]$ and simple computations, detailed in \autoref{prop:ii}, show that

\begin{equation}
\begin{split}
	\Delta E_{k_0, k_1}[t_0, t_1] &= 
	\overbrace{\Delta E_{k_0}[t_0] + \Delta E_{k_1}[t_1]}^{
		\text{iterative steps}} \\
	 & - \underbrace{\mathcal S_{k_0, k_1}[t_1 - t_0]
		\Delta Z_{k_0}[t_0] \Delta Z_{k_1}[t_1]}_{
			\text{interference}}, \label{eq:interf}
\end{split}
\end{equation}

If $|t_1 - t_0| \ge W$, then $\mathcal S_{k_0, k_1}[t_1-t_0] = 0$ and the updates can be considered to be sequential as the interference term is zero. When $|t_1 - t_0| < W$, the interference term does not vanish  but \autoref{sec:analysis} shows that under mild assumption, this term can be controlled and it does not make the algorithm diverge.

\subsection{Randomized Locally Greedy Coordinate Descent (SeqDICOD)}
\label{sub:dicod:SeqDICOD}

The theoretical analysis in \autoref{thm:speedup} shows that DICOD provides a super-linear acceleration compared to the greedy coordinate descent. This result is supported with the numerical experiment presented in \autoref{fig:scaling}. The super-linear speed up results from a double acceleration, provided by the parallelization of the updates -- we update $M$ coefficients at each iteration -- and also by the reduction of the complexity of each iteration. Indeed, each core computes greedy updates with linear in complexity on $1/M$-th of the signal.
This super-linear speed-up means that running DICOD sequentially will still provide a speed-up compared to the greedy coordinate descent algorithm.

\autoref{alg:SeqDICOD} presents SeqDICOD. This algorithm is a sequential version of DICOD. At each step, one segment $\mathcal C_m$ is selected uniformly at random between the $M$ segments. The greedy coordinate descent algorithm is applied locally on this segment. This update is only locally greedy and maximizes
\[
	(k_0, t_0) = \argmax_{(k, t)\in\mathcal C_m} |\Delta Z_k[t]|
\]
This coordinate is then updated to its optimal value $Z'_{k_0}[t_0]$. In this case, there is no interference as the segments are not updated simultaneously.

Note that if $M = T$, this algorithm becomes very close to the randomized coordinate descent. The coordinate is selected greedily only between the $K$ different channels of the signal $Z$ at the selected time. So the selection of $M$ depends on a tradeoff between the randomized coordinate descent and the greedy coordinate descent.

\begin{algorithm}[tp]
\begin{algorithmic}[1]

	\STATE \textbf{Input: }$\pmb D, X$, parameter $\epsilon >  0$, number of segments $M$
	\STATE Initialize $\beta_k[t]$ and $Z_k[t]$ for all $(k,t)$ in $\mathcal C$
	\STATE Initialize $dZ_m = +\infty$ for $m \in \llbracket 1, M\rrbracket$
	\REPEAT
	\STATE Randomly select $m \in \llbracket 1, M\rrbracket$
	\STATE  $\forall (k, t) \in \mathcal C_m$, compute $Z'_k[t]$ with \autoref{eq:pb_coord}
	\STATE Choose $\displaystyle(k_0, t_0) = \argmax_{(k, t)\in\mathcal C_m} |\Delta Z_k[t]|$
	\STATE Update $\beta$ with \autoref{eq:beta_up}
	\STATE Update the current point estimate $Z_{k_0}[t_0]^{(q+1)}\leftarrow{}Z'_{k_0}[t_0]$%
	\STATE Update max updates vector $dZ_m = \left|Z_{k_0}[t_0]^{(q+1)} - Z'_{k_0}[t_0]\right|$%
	\UNTIL{$\|dZ\|_\infty < \epsilon$ and $\|\Delta Z\|_\infty < \epsilon$  }

\end{algorithmic}
\caption{Locally greedy coordinate descent SeqDICOD$_M$}
\label{alg:SeqDICOD}
\end{algorithm}


\subsection{Discussion}
\label{sub:related_distrib}

This algorithm differs from the existing paradigm to distribute CD \citep{Scherrer2012a, Scherrer2012, Bradley2011, Yu2012, You2016} as it does not rely on centralized communication. Indeed, other parallel coordinate descent algorithms rely on a parameter server, which is an extra worker that holds the current value of $Z$. As the size of the problem and the number of nodes grow, the communication cost can rapidly become an issue with this kind of centralized communication. The natural workload split proposed with DICOD allows for more efficient interactions  between the workers and reduces the need for inter-node communications. Moreover, to prevent the interferences breaking the convergence, existing algorithms rely either on synchronous updates \citep{Bradley2011, Yu2012} or on reduced step size in the updates \cite{You2016, Scherrer2012a}. In both case, they are less efficient than our asynchronous greedy algorithm that can leverage the convolutional structure of the problem to use both large updates and independent processes without external parameters.

\CHANGES{
As seen in the introduction, another way to improve the computational complexity of sparse coding algorithms is to estimate the non-zero coefficients of the optimal solution in order to reduce the dimension of the optimization space. As this research direction is orthogonal to the parallelization of the coordinate descent, it would be possible to combine our algorithm with either screening \citep{Ghaoui2012,Fercoq2015} or active-set methods \citep{Johnson2015}. The evaluation of the performances of our algorithm with these strategies is left for future work.
}


\crossref{}

\section{Properties of DICOD}
\label{sec:analysis}



\paragraph{Convergence of DICOD.}
The magnitude of the interference is related to the value of the cross-correlation between dictionary elements, as shown in \autoref{prop:interf}. Thus, when the interferences have low probability and small magnitude, the distributed algorithm behaves as if the updates were applied sequentially, resulting in a large acceleration compared to the sequential CD algorithm.

\begin{restatable}{proposition}{PropInterf}\label{prop:interf}
For concurrent updates for coefficients $(k_0, t_0)$ and $(k_1, t_1)$ of a sparse code Z, the cost update $\Delta E_{k_0k_1}[t_0,t_1]$ is lower bounded by
\begin{equation}\begin{split}
	\Delta E_{k_0k_1}[t_0,t_1] \ge &  
		\Delta E_{k_0}[t_0] + \Delta E_{k_1}[t_1]\\
		&- 2 \frac{\mathcal S_{k_0, k_1}[t_0-t_1]}
	{\|\pmb D_{k_0}\|_2\|\pmb D_{k_1}\|_2}\sqrt{\Delta E_{k_0}[t_0]
			\Delta E_{k_1}[t_1]}.
\end{split}\end{equation}
\end{restatable}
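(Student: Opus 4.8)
The plan is to start from the exact decomposition of the joint cost reduction for two interfering updates, which is given in \autoref{prop:ii} as \autoref{eq:interf},
\[
	\Delta E_{k_0, k_1}[t_0, t_1] = \Delta E_{k_0}[t_0] + \Delta E_{k_1}[t_1] - \mathcal S_{k_0, k_1}[t_1 - t_0]\,\Delta Z_{k_0}[t_0]\,\Delta Z_{k_1}[t_1].
\]
Since the first two terms already match the statement, the whole task reduces to lower bounding the single interference term, i.e. to upper bounding its magnitude. For that I would relate the size of one coordinate step $|\Delta Z_{k_i}[t_i]|$ to the cost decrease $\Delta E_{k_i}[t_i]$ it produces, through the auxiliary inequality
\[
	\Delta E_{k_i}[t_i] \ \ge\ \frac{1}{2}\|\pmb D_{k_i}\|_2^2\,(\Delta Z_{k_i}[t_i])^2 ,
\]
equivalently $|\Delta Z_{k_i}[t_i]| \le \sqrt{2\,\Delta E_{k_i}[t_i]}\,/\,\|\pmb D_{k_i}\|_2$.

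To establish this inequality I would restrict $E$ to the single coordinate $z = Z_{k_i}[t_i]$, keeping all others fixed. Using the definition of $\beta_{k_i}[t_i]$, this restriction is the one-dimensional convex function $f(z) = \frac{1}{2}\|\pmb D_{k_i}\|_2^2\,z^2 - \beta_{k_i}[t_i]\,z + \lambda|z| + \mathrm{const}$, a piecewise quadratic with curvature $\|\pmb D_{k_i}\|_2^2$ whose minimiser is exactly the soft-thresholded value $Z'_{k_i}[t_i]$ of \autoref{eq:pb_coord}. When the old value and the optimal value lie on the same side of the kink at $0$, $f$ is a genuine quadratic on the relevant interval and the decrease is exactly $\frac{1}{2}\|\pmb D_{k_i}\|_2^2(\Delta Z_{k_i}[t_i])^2$, so equality holds. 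The main obstacle is the case where the update crosses the kink: here a short computation shows the $\ell_1$ term adds a nonnegative contribution to the decrease, so $\Delta E_{k_i}[t_i]$ can only be larger and the inequality survives in all cases.

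It then remains to assemble the pieces. Bounding the interference term by its absolute value and inserting the two one-coordinate estimates gives
\[
	|\Delta Z_{k_0}[t_0]\,\Delta Z_{k_1}[t_1]| \ \le\ \frac{\sqrt{2\,\Delta E_{k_0}[t_0]}\,\sqrt{2\,\Delta E_{k_1}[t_1]}}{\|\pmb D_{k_0}\|_2\,\|\pmb D_{k_1}\|_2} = \frac{2\,\sqrt{\Delta E_{k_0}[t_0]\,\Delta E_{k_1}[t_1]}}{\|\pmb D_{k_0}\|_2\,\|\pmb D_{k_1}\|_2},
\]
where the two factors $\sqrt{2}$ combine to produce precisely the constant $2$ of the statement. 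Substituting into \autoref{eq:interf} and bounding the interference coefficient by the cross-correlation magnitude $|\mathcal S_{k_0,k_1}[t_1-t_0]|$ recorded in the statement yields the claimed lower bound. I expect the crux to be the kink analysis of $f$; the remaining steps are only Cauchy--Schwarz / AM--GM bookkeeping.
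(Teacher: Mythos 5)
Your proposal is correct and follows essentially the same route as the paper: it starts from the exact decomposition of Proposition~\ref{prop:ii}, bounds the interference term via the one-coordinate estimate $|\Delta Z_{k_i}[t_i]| \le \sqrt{2\,\Delta E_{k_i}[t_i]}\,/\,\|\pmb D_{k_i}\|_2$, and assembles the pieces exactly as the paper does. The only divergence is how that estimate is established --- the paper deduces it from the $\|\pmb D_k\|_2^2$-strong convexity of the one-dimensional restriction (Lemma~\ref{lem:scvx}, proved by subdifferential monotonicity) together with the optimality of $Z'_k[t]$, while you re-derive the same quadratic-growth property by explicit case analysis around the kink; both are valid, with the minor caveat that in your ``same side'' case the decrease equals $\frac{1}{2}\|\pmb D_k\|_2^2(\Delta Z_k[t])^2$ only when the soft threshold does not clip to zero (otherwise it is strictly larger, which still gives the needed inequality).
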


The proof of this proposition is given in \mref{sub:annex:cvg:interf}{Appendix}. It relies on the $\|\pmb D_k\|_2^2$-strong convexity of \autoref{eq:pb_coord}, which gives $|\Delta Z_k[t]| \le \frac{\sqrt{2 \Delta E_k[t](Z)}}{\| \pmb D_k\|_2}$ for all $Z$. Using this inequality with \autoref{eq:interf} yields the result.\\

This proposition controls the magnitude of the interference using the cost reduction associated to a single update. When the correlations between the different elements of the dictionary are small enough, the interfering update does not increase the cost function. The updates are less efficient but do not worsen the current estimate. Using this control on the interferences, we can prove the convergence of DICOD.
\begin{restatable}{theorem}{convDICOD}\label{thm:conv_dcp}
Consider the following hypotheses,
\begin{hypothesis}\label{hyp:H1}
	For all $\smeq(k_0, t_0), (k_1, t_1)$ such
	that $\smeq t_0 \neq t_1 $, $~~~\left|\frac{\mathcal S_{k_0, k_1}[t_0-t_1]}
	{\|\pmb D_{k_0}\|_2\|\pmb D_{k_1}\|_2}\right| < 1~.$
\end{hypothesis}
\begin{hypothesis}\label{hyp:H2}
	There exists $A \in \mathbb N^*$ such that all cores
	$m \in \llbracket 1, M\rrbracket$ are updated at least once between iteration
	$i$ and $i+A$ if the solution is not locally optimal.
\end{hypothesis}
\begin{hypothesis}\label{hyp:H3}
	The delay in communication between the processes is inferior to the update time.
\end{hypothesis}
Under \autoref{hyp:H1}-\autoref{hyp:H2}-\autoref{hyp:H3}, the DICOD algorithm converges to the optimal solution $Z^*$ of \autoref{eq:sparse_code}.
\end{restatable}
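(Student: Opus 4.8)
The plan is to use the objective $E$ itself as a Lyapunov function: I would show that $E$ is non-increasing along the iterations of DICOD and bounded below, deduce that the performed updates satisfy $\Delta Z \to 0$, and finally identify the limit with the coordinate-wise optimum of \autoref{eq:sparse_code}, which for this convex problem with a separable penalty coincides with the global minimizer $Z^*$. The three hypotheses enter respectively to control interference (\autoref{hyp:H1}), to guarantee that every coordinate is eventually seen (\autoref{hyp:H2}), and to confine the non-sequential behaviour to simultaneous neighbour pairs (\autoref{hyp:H3}).

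For the monotone decrease I would distinguish local and interfering updates. A non-interfering update is an exact sequential coordinate minimization, and the $\|\pmb D_{k_0}\|_2^2$-strong convexity of \autoref{eq:pb_coord} (the estimate already used for \autoref{prop:interf}) gives $\Delta E_{k_0}[t_0] \ge \frac{\|\pmb D_{k_0}\|_2^2}{2}|\Delta Z_{k_0}[t_0]|^2 \ge 0$. For a pair of simultaneous updates on neighbouring cores I would apply \autoref{prop:interf}: writing $a=\sqrt{\Delta E_{k_0}[t_0]}$, $b=\sqrt{\Delta E_{k_1}[t_1]}$ and $\gamma = \max |\mathcal S_{k_0,k_1}[t_0-t_1]|/(\|\pmb D_{k_0}\|_2\|\pmb D_{k_1}\|_2)$, the bound reads $\Delta E_{k_0k_1}[t_0,t_1] \ge a^2 + b^2 - 2\gamma ab \ge (1-\gamma)(a^2+b^2)$. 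Since the cores own disjoint time segments we always have $t_0 \neq t_1$ and $t_0-t_1$ ranges over the finite set $\llbracket -W+1, W-1\rrbracket\setminus\{0\}$, so \autoref{hyp:H1} yields a \emph{uniform} $\gamma < 1$ and each interfering pair still strictly decreases $E$ unless both updates vanish. \autoref{hyp:H3} makes this pairwise picture exhaustive: messages arrive before the next local update, so the only departure from a purely sequential execution is a pair of concurrent updates on adjacent cores, while concurrent updates on non-adjacent cores have $\mathcal S_{k_0,k_1}[t_0-t_1]=0$ and do not interfere.

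Since $E(Z) \ge \lambda\sum_k\|Z_k\|_1 \ge 0$, the non-increasing sequence $E^{(q)}$ converges; summing the per-step decreases telescopes and is bounded by $E(Z^{(0)})$, so $\Delta E \to 0$ along the iterations and, by the strong-convexity estimate, $|\Delta Z_{k_0}[t_0]| \to 0$ for every updated coordinate. Moreover $E \ge \lambda\|\cdot\|_1$ confines the iterates to a compact sublevel set, so some subsequence $Z^{(q_j)}$ converges to a point $Z^\infty$.

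I would then use \autoref{hyp:H2} to upgrade ``the greedily chosen coordinate of each updated core has $\Delta Z \to 0$'' into ``\emph{every} coordinate has $\Delta Z \to 0$'': within any window of $A$ iterations each core is visited and, being greedy, controls $\max_{(k,t)\in\mathcal C_m}|\Delta Z_k[t]|$ over its whole segment. As $\beta$, hence $Z'$ and $\Delta Z = Z - Z'$, depend continuously on $Z$, passing to the limit gives $\Delta Z^\infty_k[t]=0$ for all $(k,t)$, i.e. $Z^\infty = (Z^\infty)'$; by convexity of the smooth part and separability of the $\ell_1$ penalty this coordinate-wise optimality is equivalent to global optimality, so $Z^\infty = Z^*$. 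I expect the main obstacle to be precisely the asynchronous bookkeeping underlying the first two steps: one must fix a consistent global ordering of the physically concurrent updates, argue from \autoref{hyp:H3} that interference stays confined to simultaneous neighbour pairs (and, when several cores fire at once, that the non-adjacent cross terms vanish so \autoref{prop:interf} still yields a non-negative global decrease), and then carry the per-core greedy control through \autoref{hyp:H2} uniformly enough to conclude $\Delta Z \to 0$ on all coordinates rather than merely on the selected ones.
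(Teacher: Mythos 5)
Your plan reproduces the paper's proof (\autoref{sub:annex:cvg:proof}) almost step for step: the same reduction via \autoref{hyp:H3} of each step's cost variation to either a sequential update or a single interfering pair, the same use of \autoref{prop:interf} with \autoref{hyp:H1} to make every step non-increasing, the same telescoping-plus-coercivity argument, and the same endgame --- \autoref{hyp:H2}, greediness of the per-core selection, and continuity of the map $Z \mapsto Z'_k[t] = \frac{1}{\|\pmb D_k\|_2^2}\mathrm{Sh}(\beta_k[t],\lambda)$ to show the limit is coordinate-wise optimal, then coordinate-wise optimality implies global optimality (the paper cites \autoref{thm:opt_coord} of Osher--Li for this; your smooth-plus-separable argument is the same fact). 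One point where you are in fact sharper than the paper: extracting a \emph{uniform} $\gamma < 1$ from \autoref{hyp:H1} (the offsets $t_0-t_1$ range over the finite set $\llbracket -W+1, W-1\rrbracket\setminus\{0\}$) and bounding $\Delta E_{k_0k_1}[t_0,t_1] \ge (1-\gamma)\left(\Delta E_{k_0}[t_0]+\Delta E_{k_1}[t_1]\right)$ strengthens the paper's bound $\Delta E_{k_0k_1} > \left(\sqrt{\Delta E_{k_0}}-\sqrt{\Delta E_{k_1}}\right)^2$, which can vanish even when both individual decreases are large; your version is exactly what is needed to control $|\Delta Z|$ by $\sqrt{\Delta E^{(q_i)}}$ at \emph{interfering} iterations, a step the paper performs without comment.

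The one genuine gap is at the very end: you only extract a convergent subsequence, show its limit $Z^\infty$ is coordinate-wise hence globally optimal, and then write $Z^\infty = Z^*$. This silently assumes the minimizer of \autoref{eq:sparse_code} is unique, which a LASSO-type objective does not guarantee: $E$ is strictly convex in each coordinate separately (\autoref{lem:scvx}), not jointly, so the solution set may be a nontrivial convex set. Without uniqueness, ``every limit point is optimal'' only yields $\operatorname{dist}(Z^{(q)}, \arg\min E) \to 0$, not convergence of the sequence as the theorem asserts. The paper devotes an extra step to this, proving $(Z^{(i)})_i$ is Cauchy via $\|Z^{(p)}-Z^{(q)}\|_\infty^2 \lesssim \sum_{l>q}\Delta E^{(l)} = E(Z^{(q)})-E^*$ before identifying the limit. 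You should add such a step; note that your own uniform bound $(1-\gamma)(a^2+b^2)$ is precisely the tool that makes the per-step estimate $\|Z^{(l+1)}-Z^{(l)}\|_\infty^2 \lesssim \Delta E^{(l)}$ valid at interfering iterations as well, so the fix slots naturally into your argument.
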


Assumption \autoref{hyp:H1} is satisfied as long as the dictionary elements are not replicated in shifted positions in the dictionary. It ensures that the cost is updated in the right direction at each step. This assumption can be linked to the shifted mutual coherence introduced in \citet{Papyan2016a}.

Hypothesis \autoref{hyp:H2} ensures that all coefficients are updated regularly if they are not already optimal. This analysis is not valid when one of the cores fails. As only one core is responsible for the update of a local segment, if a worker fails, this segment cannot be updated anymore and thus the algorithm will not converge to the optimal solution.

Finally, under \autoref{hyp:H3}, an interference only results from one update on each core. Multiple interferences occur when a core updates multiple coefficients in the border of its segment before receiving the communication from other processes border updates. When $T \gg W$, the probability of multiple interference is low and this hypothesis can be relaxed if the updates are not concentrated on the borders.

\begin{proof}[\textbf{Proof sketch for \autoref{thm:conv_dcp}.}]
The full proof can be found in \mref{sub:annex:cvg:proof}{Appendix}. The main argument in proving the convergence is to show that most of the updates can be considered sequentially and that the remaining updates do not increase the cost of the current point.
By \autoref{hyp:H3}, for a given iteration, a core can interfere with at most one other core. Thus, without loss of generality, we can consider that at each step $q$, the variation of the cost $E$ is either $\Delta E_{k_0}[t_0](Z^{(q)})$ or $\Delta E_{k_0k_1}[t_0,t_1](Z^{(q)}),$ for some \mbox{$(k_0, t_0), (k_1, t_1) \in \llbracket 1, K\rrbracket\times\llbracket 0, T-1\rrbracket$}~. \autoref{prop:interf} and \autoref{hyp:H1} proves that $\Delta E_{k_0k_1}[t_0,t_1](Z^{(q)}) \ge 0$. For a single update $\Delta E_{k_0}[t_0](Z^{(q)})$, the update is equivalent to a sequential update in CD, with the coordinate chosen randomly between the best in each segments. Thus, $\Delta E_{k_0}[t_0](Z^{(q)}) > 0$ and the convergence is eventually proved using results from \citet{Osher2009}.
\end{proof}

\paragraph{Speedup of DICOD.}
\label{par:analysis:speedup}

We denote $S_{cd}(M)$ the speedup of DICOD compared to the sequential greedy CD. This quantifies the number of iterations that can be run by DICOD during one iteration of CD.

\begin{restatable}{theorem}{SpeedDICOD}\label{thm:speedup}
Let $\alpha = \frac{W}{T}$ and $M\in\mathbb N^*~.$ If $\alpha M < \frac{1}{4}$ and if the non-zero coefficients of the sparse code are distributed uniformly in time, the expected speedup $\mathbb E [S_{cd}(M)]$ is lower bounded by
\[
	\mathbb E[S_{cd}(M)] \ge M^2 ( 1- 2\alpha^2M^2\left(\smeq1 +2\alpha^2M^2\right)^{\frac{M}{2}-1})~.
\]
\end{restatable}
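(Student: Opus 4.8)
The plan is to split the speedup into two multiplicative sources of acceleration, giving the leading $M^2$, and then to pay for the interferences with a probabilistic correction. First I would establish the factor $M^2$. A single iteration of greedy CD (\mref{alg:cd}{Algorithm}) is dominated by the selection $\arg\max_{(k,t)}|\Delta Z_k[t]|$ over the $KL$ coordinates and therefore costs $\bO{KT}$. In DICOD each worker only scans its own segment $\mathcal C_m$ of length $L_M = L/M$, so one local iteration costs $\bO{KT/M}$ and is $M$ times cheaper. Running the $M$ workers in parallel, in the wall-clock time of one CD iteration each worker completes $M$ local iterations, for a total of $M\cdot M = M^2$ updates: a factor $M$ from parallelism and a factor $M$ from the reduced per-iteration complexity.

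Next I would discount the updates lost to interferences. Grouping the updates into super-steps of $M$ simultaneous updates (one per worker), a super-step is worth $M$ sequential CD iterations exactly when none of its updates interfere; indeed, by \autoref{eq:interf}, two updates $(k_0,t_0)$, $(k_1,t_1)$ on neighbouring cores interfere only when $|t_0-t_1| < W$, and a non-interfering update is a valid local-greedy CD step as in the argument of \autoref{thm:conv_dcp}. I would thus write $\mathbb E[S_{cd}(M)] = M\cdot\mathbb E[N]$, where $N$ is the number of sequential-equivalent updates per super-step, and bound $\mathbb E[N] \ge M - \mathbb E[W_a]$ with $W_a$ the number of updates wasted by interferences, so that $\mathbb E[S_{cd}(M)] \ge M^2\bigl(1 - \mathbb E[W_a]/M\bigr)$.

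It then remains to bound the expected wasted fraction $\mathbb E[W_a]/M$. Under the hypothesis that the non-zero coefficients are uniform in time, each worker's updated coordinate is uniform in its segment. Writing the distances of two neighbouring updates to their shared border as $u,v \ge 0$, an interference is the event $u+v < W$, whose probability is $\binom{W}{2}/L_M^2 \le 2\alpha^2M^2 =: \beta$ after substituting $L_M = L/M$ and $\alpha = W/T$, the spare factor absorbing the gap between $L = T-W+1$ and $T$. The structural fact that drives the rest is that $\alpha M < \frac14$ forces $L_M > 2W$, so each update lies within $W$ of at most one border; hence a worker can interfere with at most one neighbour, the interfering borders form a matching on the path $1\text{--}\cdots\text{--}M$, and at most $M/2$ of them occur at once. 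Bounding the probability of any configuration of $k$ disjoint interfering borders by $\beta^k$ and summing over configurations — one interfering border contributes the factor $\beta$ and each of the remaining (at most $M/2-1$) borders contributes a factor $1+\beta$ — yields $\mathbb E[W_a]/M \le \beta(1+\beta)^{M/2-1} = 2\alpha^2M^2(1+2\alpha^2M^2)^{M/2-1}$, which gives the claimed bound.

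I expect the main obstacle to be this last summation: the interference events across different borders are not independent, since neighbouring borders share a worker's coordinate and are in fact mutually exclusive, so the expected number of wasted updates cannot be controlled by a naive union bound and one must exploit the matching/mutual-exclusion structure. It is precisely this bookkeeping, capped at $M/2$ simultaneous interferences, that produces the unusual factor $(1+2\alpha^2M^2)^{M/2-1}$ and its exponent $M/2$; getting the constant in $\beta$ right and justifying the uniform-coordinate model from the uniform-nonzero hypothesis is a secondary point of care.
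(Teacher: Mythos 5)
Your proposal is correct and follows essentially the same route as the paper's proof: a factor $M$ from the reduced per-iteration cost of the local greedy search, a factor $M$ from the parallel updates discounted by the expected number of interferences, a per-border interference probability of order $(M\alpha)^2$ bounded by $2\alpha^2M^2$, and the cap of $M/2$ simultaneous interferences feeding the sum $\sum_j j\binom{M/2}{j}(2\alpha^2M^2)^j = \frac{M}{2}\,2\alpha^2M^2\left(1+2\alpha^2M^2\right)^{M/2-1}$. Your treatment is in fact slightly more careful than the paper's on two secondary points (deriving the per-border probability from the event $u+v<W$, and making explicit the matching/mutual-exclusion structure behind the $\binom{M/2}{j}$ bound), but the decomposition and the final computation are identical.
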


This result can be simplified when the interference probability $(\alpha M)^2$  is small.

\begin{restatable}{corollary}{corSpeedDICOD}
\label{corr:speedup}
The expected speedup $\mathbb E [S_{cd}(M)]$ when $M\alpha \to 0$ is such that
\[
	\mathbb E[S_{cd}(M)] \underset{\alpha \to 0}{\gtrsim} M^2(1-2\alpha^2M^2 + \mathcal O(\alpha^4M^4))~.
\]
\end{restatable}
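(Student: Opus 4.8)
The plan is to derive the corollary directly from the explicit lower bound of \autoref{thm:speedup} by a Taylor expansion in the small parameter $\alpha$. Setting $u = 2\alpha^2 M^2 = 2(\alpha M)^2$, the theorem reads
\[
	\mathbb E[S_{cd}(M)] \ge M^2\left(1 - u\,(1+u)^{\frac{M}{2} - 1}\right),
\]
and since the hypothesis $M\alpha \to 0$ forces $u \to 0$, the whole task reduces to expanding the correction factor $u\,(1+u)^{M/2-1}$ to leading order in $u$ and collecting the remainder into the $\mathcal O(\alpha^4 M^4)$ term. No new probabilistic or optimization argument is needed; the corollary is purely an asymptotic simplification of the closed-form bound already established.

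First I would control the power $(1+u)^{M/2-1}$ by writing it as $\exp\big((\tfrac{M}{2}-1)\ln(1+u)\big)$ and using $\ln(1+u) = u + \mathcal O(u^2)$, which gives $(1+u)^{M/2-1} = 1 + (\tfrac{M}{2}-1)u + \mathcal O\big((Mu)^2\big)$. Multiplying by $u$ then yields
\[
	u\,(1+u)^{\frac{M}{2}-1} = u + \mathcal O(M u^2) = 2\alpha^2 M^2 + \mathcal O(\alpha^4 M^5),
\]
after re-expressing the remainder in terms of $\alpha$ and $M$. Substituting back into the bound and factoring out $M^2$ produces $\mathbb E[S_{cd}(M)] \gtrsim M^2\big(1 - 2\alpha^2 M^2 + \mathcal O(\alpha^4 M^4)\big)$, which is the claim; the $\gtrsim$ symbol already signals that we are stating the leading-order behaviour of the lower bound rather than asserting an exact inequality.

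The step I expect to be \emph{delicate} is the growing exponent $\tfrac{M}{2}-1$ together with the precise bookkeeping of powers of $M$ in the remainder. The next-order term is $(\tfrac{M}{2}-1)u^2 = (\tfrac{M}{2}-1)(2\alpha^2 M^2)^2$, whose leading part is of order $\alpha^4 M^5$, so strictly the remainder carries an $M^5$; matching the stated $\mathcal O(\alpha^4 M^4)$ amounts to reading the corollary as an expansion at fixed $M$, where the big-$\mathcal O$ in $\alpha$ absorbs the extra factor of $M$. Making the expansion rigorous also requires justifying that $(1+u)^{M/2-1} \to 1$ despite the exponent growing with $M$, which needs $(\tfrac{M}{2})\ln(1+u) \approx \alpha^2 M^3 \to 0$ rather than merely $\alpha M \to 0$; here I would invoke the standing assumption $\alpha M < \tfrac{1}{4}$ from \autoref{thm:speedup} to keep the base $1+u$ bounded and carry out the estimate along sequences with $\alpha^2 M^3 \to 0$. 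Once this regime is fixed, the remaining manipulations are a routine first-order Taylor expansion.
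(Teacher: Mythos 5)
Your proof is correct and takes essentially the same route as the paper: the corollary is obtained purely by a first-order Taylor expansion of the bound from \autoref{thm:speedup} in the small parameter $u = 2\alpha^2M^2$, which is exactly what the paper's appendix does in a single unexplained $\gtrsim$ step. Your extra observation — that the next-order term is strictly of order $\alpha^4 M^5$, so the stated $\mathcal O(\alpha^4 M^4)$ must be read as an expansion at fixed $M$ (or along sequences with $\alpha^2 M^3 \to 0$ to keep the growing exponent under control) — is a valid and careful refinement of a bookkeeping point the paper silently glosses over.
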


\begin{proof}[\textbf{Proof sketch for \autoref{thm:speedup}.}]
The full proof can be found in \autoref{sec:annex:speedup}. There are two aspects involved in DICOD speedup: the computational complexity and the acceleration due to the parallel updates. As stated in \autoref{sub:problem}, the complexity of each iteration for CD is linear with the length of the input signal $T$. In DICOD, each core runs on a segment of size $\frac{T}{M}$. This accelerates the execution of individual updates by a factor $M$. Moreover, all the cores compute their update simultaneously. The updates without interference are equivalent to sequential updates. Interfering updates happen with probability $\left(M\alpha\right)^2$ and do not increase the cost. Thus, one  iteration of DICOD with $N_i$ interferences provides a cost variation equivalent to $M - 2N_i$ iterations using sequential CD and, in expectation, it is equivalent to $M - 2\mathbb E[N_i]$ iterations of DICOD. The probability of interference depends on the ratio between the length of the segments  used for each core and the size of the dictionary. If all the updates are spread uniformly on each segment, the probability of interference between 2 neighboring cores is $\left(\frac{MW}{T}\right)^2$. The expected number of interference $\mathbb E[N_i]$ can be upper bounded using this probability and this yields the desired result.
\end{proof}

The overall speedup of DICOD is super-linear compared to sequential greedy CD for the regime where $\alpha M \ll 1$. It is almost quadratic for small $M$ but as $M$ grows, there is a sharp transition that significantly deteriorates the acceleration provided by DICOD. \autoref{sec:result} empirically highlights this behavior. For a given $\alpha$, it is possible to approximate the optimal number of cores $M$ to solve convolutional sparse coding problems.

Note that this super-linear speed up is due to the fact that CD is inefficient for long signals, as its iterations are computationally too expensive to be competitive with the other methods. The fact that we have a super-linear speed-up means that running DICOD sequentially will provide an acceleration compared to CD (see \autoref{sub:dicod:SeqDICOD}). For the sequential run of DICOD, called SeqDICOD, we have a linear speed-up in comparison to CD, when $M$ is small enough. Indeed, the iteration cost is divided by $M$ as we only need to find the maximal update on a local segment of size $\frac{T}{M}$. When increasing $M$ over $\frac{T}{W}$, the iteration cost does not decrease anymore as updating $\beta$ costs $\bO{KW}$ and finding the best coordinate has the same complexity.

\crossref{}

\section{Numerical Results}
\label{sec:result}

\begin{figure}[tp]
	\centering
	\includegraphics[width=.49\textwidth]{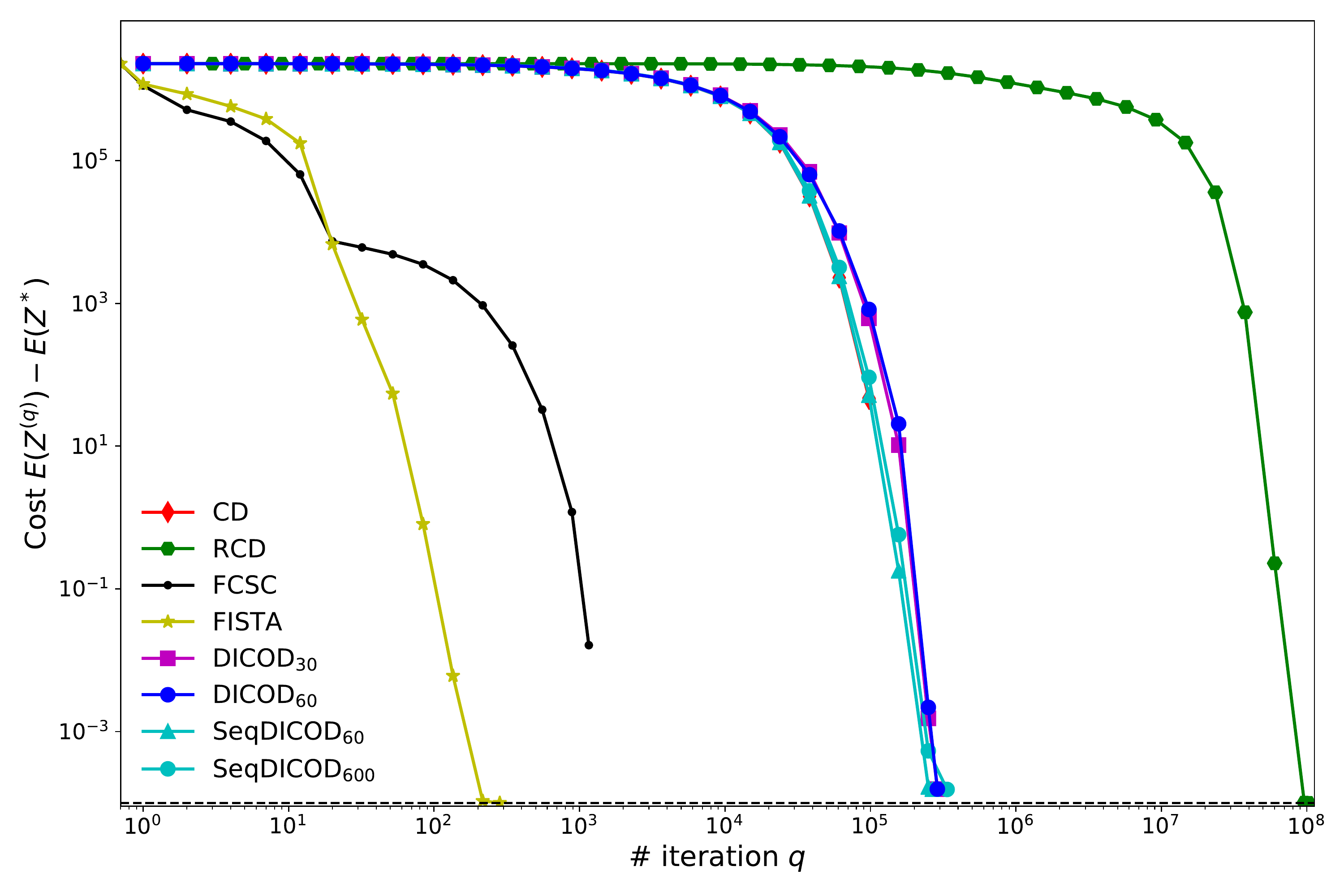}
	\caption{
		Evolution of the loss function for DICOD, SeqDICOD,
		CD, FCSC and FISTA while solving sparse coding for a
		signal generated with default parameters relatively
		to the number of iterations.
	}
	\label{fig:cost:iter}
\end{figure}

All the numerical experiments are run on five Linux machines with 16 to 24 Intel Xeon 2.70 GHz processors and at least 64 GB of RAM on local network. We use a combination of Python, C++ and the OpenMPI 1.6 for the algorithms implementation. The code to reproduce the figures is available online \footnote{\href{https://github.com/}{see the supplementary materials}}. The run time denotes the time for the system to run the full algorithm pipeline, from cold start and includes for instance the time to start the sub-processes. The convergence refers to the variation of the cost with the number of iterations and the speed to the variation of the cost relative to time.

\paragraph{Long convolutional Sparse Coding Signals.}
\label{par:artif}

To further validate our algorithm, we generate signals and test the performances of DICOD compared to state-of-the-art methods proposed to solve convolutional sparse coding. We generate a signal $X$ of length $T$ in $\Rset^P$ following the model described in \autoref{eq:model}. The $K$ dictionary atoms $\pmb D_k$ of length $W$ are drawn as a generic dictionary. First, each entry is sampled from a Gaussian distribution. Then, each pattern is normalized such that $\|\pmb D_k\|_2 = 1$. The sparse code entries are drawn from a Bernoulli-Gaussian distribution  with Bernoulli parameter $\rho = 0.007$, mean $0$ and standard variation $\sigma=10$ . The noise term $\mathcal E$ is chosen as a Gaussian white noise with variance 1. The default values for the dimensions are set to $W=200$, $K=25$, $P=7$, $T=600\times W$ and we used $\lambda = 1$.



\paragraph{Algorithms Comparison.}

DICOD is compared to the main state-of-the-art optimization algorithms for convolutional sparse coding: Fast Convolutional Sparse Coding (FCSC) from \citet{Bristow2013}, Fast Iterative Soft Thresholding Algorithm (FISTA) using Fourier domain computation as described in \citet{Wohlberg2016}, the greedy convolutional coordinate descent (CD, \citealt{Kavukcuoglu2013}) and the randomized coordinate descent (RCD, \citealt{Nesterov2010}). All the specific parameters for these algorithms are fixed based on the authors' recommendations. DICOD$_M$ denotes the DICOD algorithm run using M cores. We also include SeqDICOD$_{M}$, for $M\in \{60, 600\}$, the sequential run of the DICOD algorithm using $M$ segments, as described in \autoref{alg:SeqDICOD}.

\begin{figure}[tp]
	\centering
	\includegraphics[width=.49\textwidth]{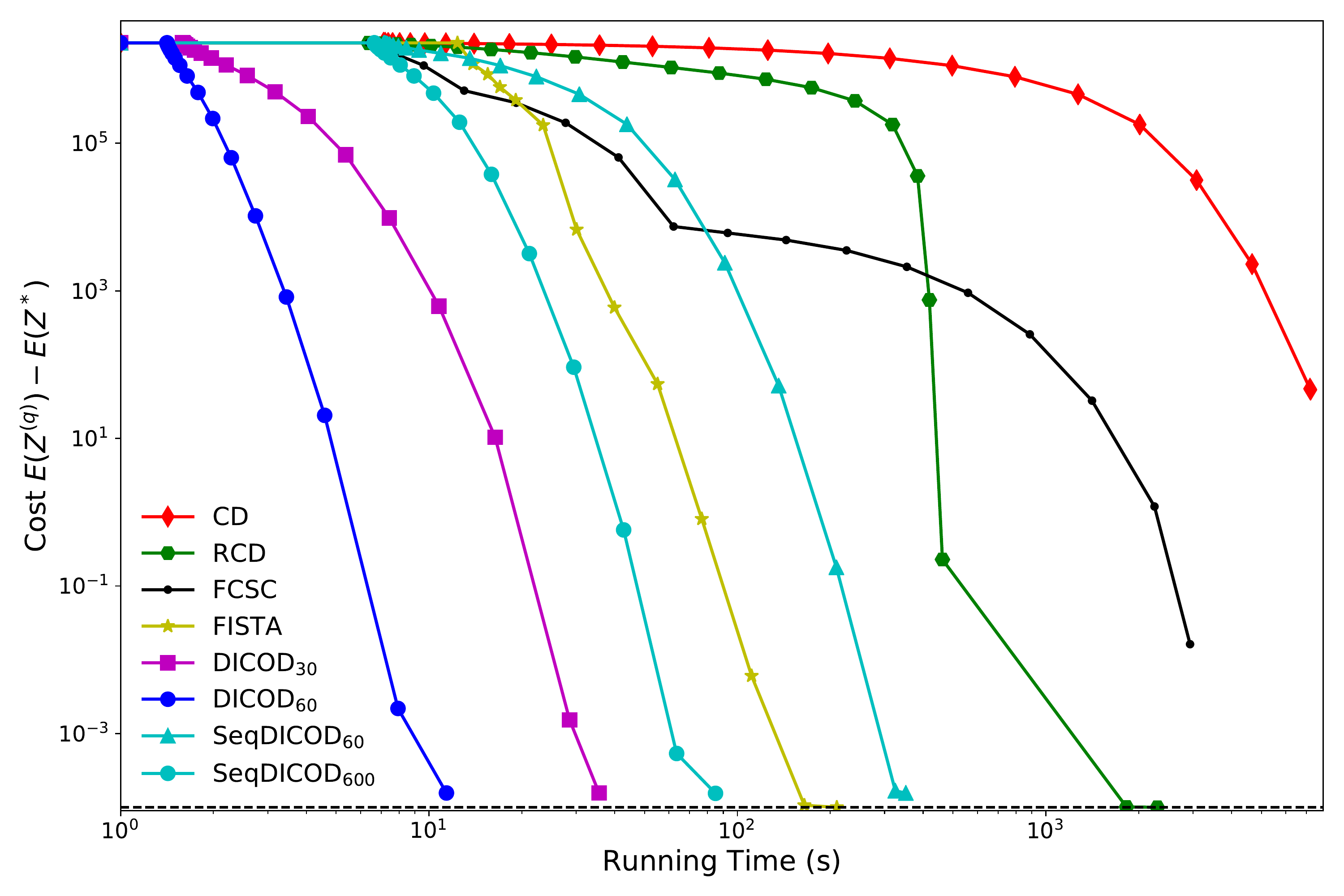}
	\caption{
		Evolution of the loss function for DICOD, SeqDICOD,
		CD, FCSC and FISTA while solving sparse coding for a
		signal generated with default parameters, relatively to
		time. This highlights the speed of the algorithm on the
		given problem.
	}
	\label{fig:cost:time}
\end{figure}

\autoref{fig:cost:iter} shows that the evolution of the performances of SeqDICOD relatively to the iterations are very close to the performances of CD. The difference between these two algorithms is that the updates are only locally greedy in SeqDICOD. As there is little difference visible between the two curves, this means that in this case, the computed updates are essentially the same. The differences are larger for SeqDICOD$_{600}$, as the choice of coordinates are more localized in this case. The performance of DICOD$_{60}$ and DICOD$_{30}$ are also close to the iteration-wise performances of CD and SeqDICOD. The small differences between DICOD and SeqDICOD result from the iterations where there are interferences. Indeed, if two iterations interfere, the cost does not decrease as much as if the iterations were done sequentially. Thus, it requires more steps to reach the same accuracy with DICOD$_{60}$ than with SeqDICOD and with DICOD$_{30}$, as there are more interferences when the number of cores $M$ increases. This explains the discrepancy in the decrease of the cost around the iteration $10^5$. However, the number of extra steps required is quite low compared to the total number of steps and the performances are mostly not affected by the interferences. The performances of RCD in terms of iterations are much slower than the greedy methods. Indeed, as only a few coefficients are useful, it takes many iterations to draw them randomly. In comparison, the greedy methods are focused on the coefficients which largely divert from their optimal value, and are thus most likely to be important. Another observation is that the performance in term of number of iterations of the global methods FCSC and FISTA are much better than the methods based on local updates. As each iteration can update all the coefficients for FISTA, the number of iterations needed to reach the optimal solution is indeed smaller than for CD, where only one coordinate is updated at a time.

In \autoref{fig:cost:time}, the speed of theses algorithms can be observed. Even though it needs many more iterations to converge, the randomized coordinate descent is faster than the greedy coordinate descent. Indeed, for very long signals, the iteration complexity of greedy CD is prohibitive. However, using the locally greedy updates, with SeqDICOD$_{60}$ and SeqDICOD$_{600}$, the greedy algorithm can be made more efficient. SeqDICOD$_{600}$ is also faster than the other state-of-the-art algorithms FISTA and FCSC. The choice of $M = 600$ is a good tradeoff for SeqDICOD as it means that the segments are of the size of the dictionary $W$. With this choice for $M = \frac{T}{W}$, the computational complexity of choosing a coordinate is $\bO{KW}$ and the complexity of maintaining $\beta$ is also $\bO{KW}$. Thus, the iterations of this algorithm have the same complexity as RCD but are more efficient.

The distributed algorithm DICOD is faster compared to all the other sequential algorithms and the speed up increases with the number of cores. Also, DICOD has a shorter initialization time compared to the other algorithms. The first point in each curve indicates the time taken by the initialization. For all the other methods, the computations for constants -- necessary to accelerate the iterations -- have a computational cost equivalent to the on of the gradient evaluation. As the segments of signal in DICOD are smaller, the initialization time is also reduced. This shows that the overhead of starting the cores is balanced by the reduction of the initial computation for long signals. For shorter signals, we have observed that the initialization time is of the same order as the other methods. The spawning overhead is indeed constant whereas the constants are cheaper to compute for small signals.

\begin{figure}[t]
	\centering
	\includegraphics[width=.49\textwidth]{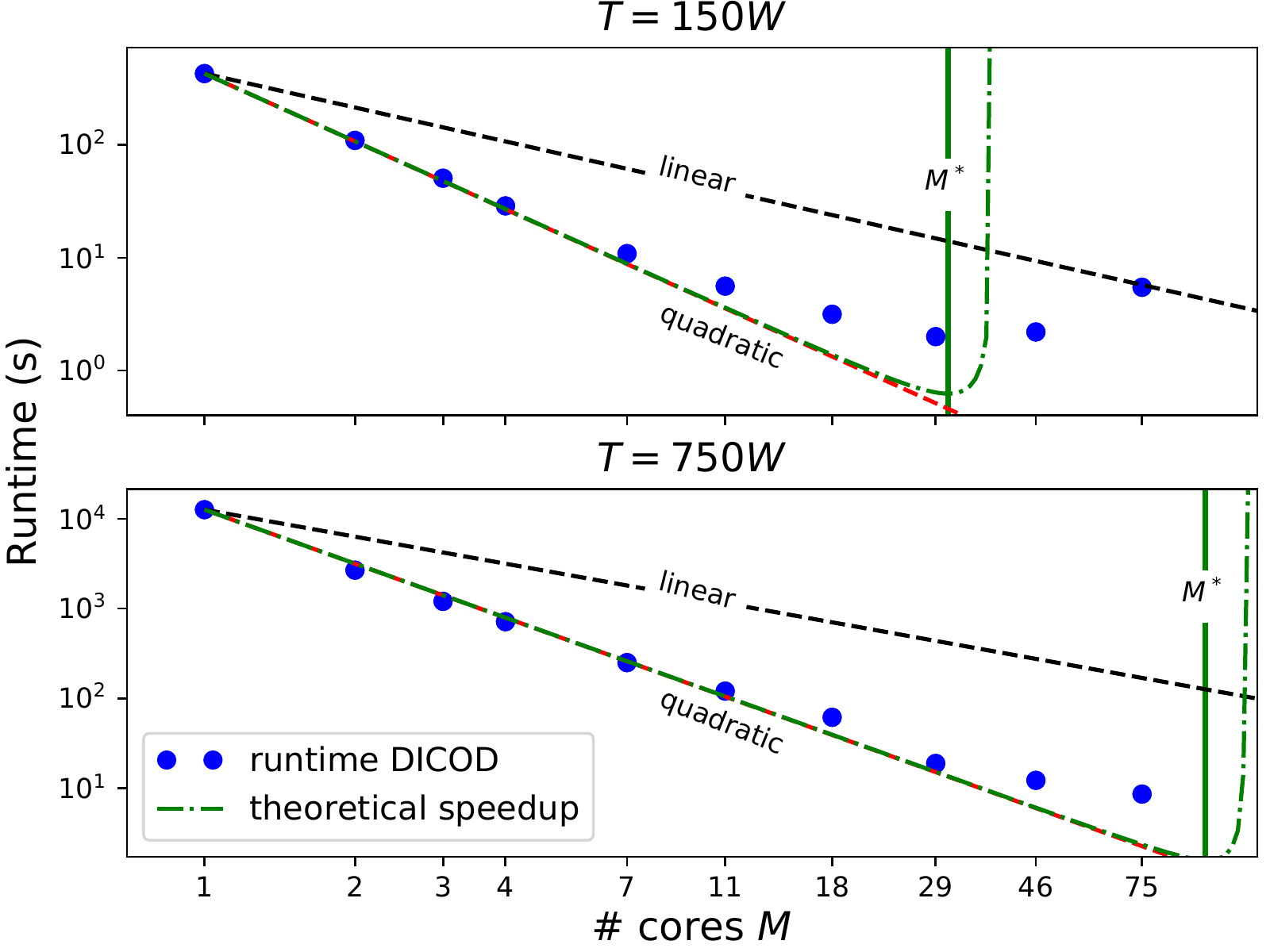}
	\caption{
		Speedup of DICOD as a function of the number of processes
		used, average over 10 run on different generated signals.
		This highlights a sharp transition between a regime of
		quadratic speedups and the regime where the interference
		are slowing down drastically the convergence.
	}
	\label{fig:scaling}
\end{figure}



\paragraph{Speedup Evaluation.}

\autoref{fig:scaling} displays the speedup of DICOD as a function of the number of cores. We used 10 generated problems for 2 signal lengths $T=150\cdot W$ and $T = 750\cdot W$ with $W=200$ and we solved them using DICOD$_M$ with a number of cores $M$ ranging from 1 to 75. The blue dots display the average running time for a given number of workers. For both setups, the speedup is super-linear up to the point where $M\alpha = \frac{1}{2}$. For small $M$ the speedup is very close to quadratic and a sharp transition occurs as the number of cores grows. The vertical solid green line indicates the approximate position of the maximal speedup given in \autoref{corr:speedup} and the dashed lined is the expected theoretical run time derived from the same expression. The transition after the maximum is very sharp. This approximation of the speedup for small values of $M\alpha$ is close to the experimental speedup observed with DICOD. The computed optimal value of $M^*$ is close to the optimal number of cores in these two examples.



\crossref{}

\section{Conclusion}

In this work, we introduced an asynchronous distributed algorithm that is able to speed up the resolution of the Convolutional Sparse Coding problem for long signals. This algorithm is guaranteed to converge to the optimal solution of \autoref{eq:sparse_code} and scales superlinearly with the number of cores used to distribute it. These claims are supported by numerical experiments highlighting the performances of DICOD compared to other state-of-the-art methods. Our proofs rely extensively on the use of one dimensional convolutions. In this setting, a process $m$ only has two neighbors $m-1$ and $m+1$. This ensures that there is no high order interferences between the updates. Our analysis does not apply straightforwardly to distributed computation using square patches of images as the higher order interferences are more complicated to handle. A way to apply our algorithm with these guarantees to images is to split the signals along only one direction, to avoid higher order interferences. The extension of our results to this case is an interesting direction for future work.

\bibliographystyle{\Bibstyle}
\bibliography{../../../library}

\appendix

\setcounter{theorem}{0}
\setcounter{hypothesis}{0}
\setcounter{equation}{8}

\onecolumn

\icmltitle{Supplemetary materials -- DICOD: Distributed Convolutional Coordinate Descent for Convolutional  Sparse Coding}

\appendix


 \section{Computation for the cost updates}
\label{sec:annex:cost_update}

When a coefficient $Z_k[t]$ is updated  to $u$, the cost update is a simple function of $Z_k[t]$ and $u$.

 \begin{propositionA}
 	The update of the weight in $(k_0, t_0)$ from the value $Z_{k_0}[t_0]$ in $Z$ to $u \in \Rset$ in $Z^{(1)}$ gives a cost cost variation:
 	\begin{align*}
	  \smeq e_{k_0, t_0}(u) & = E(Z) - E(Z^{(1)})\\
	  					    & = \frac{\|\pmb D_{k_0}\|_2^2}{2} (Z_{k_0}[t_0]^{2} - u^2)
	  						- \beta_{k_0}[t_0](Z_{k_0}[t_0] - u)
	  						+ \lambda(|Z_{k_0}[t_0]| - |u|).
    \end{align*}
 \label{propA:pb_coord}
 \end{propositionA}%
%
 \begin{proof}
 Let $\alpha_{k_0}[t] = (X - \sum_{k=1}^KZ_k*\pmb D_k)[t] + \pmb D_{k_0}[t-t_0] Z_{k_0}[t_0]$ for all $t \in \llbracket 0..T-1\rrbracket$ and \\
        $Z^{(1)}_{k}[t] =  \begin{cases}
 		    u, &\text{ if } (k, t) = (k_0, t_0)\\
            Z_{k}[t], &\text{ elsewhere }\\
 		\end{cases}$ .
 	\begin{align*}
	   e_{k_0, t_0}(u) =& \frac{1}{2}\sum_{t=0}^{T-1} \left(X - \sum_{k=1}^KZ_{k}*\pmb D_{k}\right)^2[t] + \lambda \sum_{k=1}^K\|Z_{k}\|_1
	   	- \frac{1}{2}\sum_{t=0}^{T-1} \left(X - \sum_{k=1}^K Z^{(1)}_{k}*\pmb D_{k}\right)^2[t] + \lambda \sum_{k=1}^K\|Z^{(1)}_{k}\|_1 \\
	   =& \frac{1}{2}\sum_{t=0}^{T-1}\left(\alpha_{k_0}[t] - \pmb D_{k_0}[t-t_0] Z_{k_0}[t_0]\right)^2
	   	- \frac{1}{2}\sum_{t=0}^{T-1} \left(\alpha_{k_0}[t] - \pmb D_{k_0}[t-t_0] u\right)^2
	   	+ \lambda (|Z_{k_0}[t_0]| - |u|)\\
	   =& \frac{1}{2}\sum_{t=0}^{T-1} \pmb D_{k_0}[t-t_0]^2(Z_{k_0}[t_0]^2 - u^2)
	    - \sum_{t=0}^{T-1}\alpha_{k_0}[t]\pmb D_{k_0}[t-t_0] (Z_{k_0}[t_0] - u)
	   	+ \lambda (|Z_{k_0}[t_0]| - |u|)\\
	   =& \frac{\|\pmb D_{k_0}\|_2^2}{2} (Z_{k_0}[t_0]^2 - u^2)
	    - \underbrace{(\widetilde {\pmb D_{k_0}} * \alpha_{k_0})[t]}_{\beta_{k_0}[t_0]}(Z_{k_0}[t_0] - u)
	    + \lambda (|Z_{k_0}[t_0]| - |u|)
	\end{align*}%
	This conclude our proof.
\end{proof}
Using this result, we can derive the optimal value $Z'_{k_0}[t_0]$ to update the coefficient $(k_0, t_0)$ as the solution of the following optimization problem:
\begin{equation}
	\label{eqA:pb_coord}
	Z'_{k_0}[t_0] = \arg\min_{y\in \Rset} e_{k_0, t_0}(u) \sim \arg\min_{u\in \Rset}\frac{\| \pmb D_{k_0}\|_2^2}{2}\left(u - \beta_{k_0}[t_0]\right)^2 + \lambda |u|~.\\
\end{equation} 
In the case where two coefficients $(k_0, t_0), (k_1, t_1)$ are updated in the same iteration
to values $u$ and $Z'_{k_1}[t_1]$, we obtain the following cost variation.

\begin{propositionA}\label{prop:ii}
	The update of the weight $Z_{k_0}[t_0]$ and $Z_{k_1}[t_1]$ to values $Z'_{k_0}[t_0]$ and $Z'_{k_1}[t_1]$ 
	with $\Delta Z_k[t] = Z_k[t] - Z'_k[t]$ gives an update of the cost:
	\begin{align*}
		\Delta E_{k_0 k_1}[t_0, t_1] = \Delta E_{k_0}[t_0] + \Delta E_{k_1}[t_1]
		- \mathcal S_{k_0, k_1}[t_0-t_1]\Delta Z_{k_0}[t_0] \Delta Z_{k_1}[t_1]
	\end{align*}
\end{propositionA}

\begin{proof}
We define $Z_k^{(1)}[t] = \begin{cases}
	Z_{k_0}[t_0], &\text{ if } (k, t) = (k_0, t_0)\\
	Z_{k_1}[t_1], &\text{ if } (k, t) = (k_1, t_1)\\
	Z_k[t], &\text{ otherwise }
\end{cases}~.$\\[.3em]
Let $\alpha[t] = (X - \sum_{k=1}^KZ_{k}\pmb D_{k})[t] + \pmb D_{k_0}[t-t_0] Z_{k_0}[t_0] + \pmb D_{k_1}[t-t_1] Z_{k_1}[t_1]$.\\[.5em]
We have  $ \alpha[t] = \alpha_{k_0}[t] + \pmb D_{k_1}[t-t_1] Z_{k_1}[t_1] =  \alpha_{k_1}[t] + \pmb D_{k_0}[t-t_0] Z_{k_0}[t_0]$.
 	\begin{align*}
	   \Delta E_{k_0k_1}[t_0,t_1] =
	   				& \frac{1}{2}\sum_{t=0}^{T-1} \left(X - \sum_{k=1}^K Z_k*\pmb D_k\right)[t]^2 + \frac{1}{2}\sum_{k=1}^K\lambda \|Z_k\|_1 
	   				 - \sum_{t=0}^{T-1} \left(X - \sum_{k=1}^K Z_k^{(1)}*\pmb D_k\right)^2[t] + \lambda \sum_{k=1}^K\|Z_k^{(1)}\|_1 \\
	   =
	   				& \frac{1}{2}\sum_{t=0}^{T-1}\left(\alpha[t] - \pmb D_{k_0}[t-t_0] Z_{k_0}[t_0] - \pmb D_{k_1}[t-t_1] Z_{k_1}[t_1]\right)^2
	   				 + \lambda (|Z_{k_0}[t_0]| - |Z'_{k_0}[t_0]|)\\
	   				&- \frac{1}{2}\sum_{t=0}^{T-1} \left(\alpha[t] - \pmb D_{k_0}[t-t_0] Z'_{k_0}[t_0] - \pmb D_{k_1}[t-t_1] Z'_{k_1}[t_1]\right)^2 
	   				 + \lambda (|Z_{k_1}[t_1]| - |Z'_{k_1}[t_1]|)\\
	   =
	   				&\frac{1}{2}\sum_{t=0}^{T-1}\Biggl[\pmb D_{k_0}[t-t_0]^2(Z_{k_0}[t_0]^2-{Z'_{k_0}[t_0]}^2) 
	   				 + \pmb D_{k_1}[t-t_1]^2(Z_{k_1}[t_1]^2 - {Z'_{k_1}[t_1]}^2) \Biggr]\\
	  				& -\sum_{t=0}^{T-1}\Biggl[\alpha_{k_0}[t] \pmb D_{k_0}[t-t_0]\Delta Z_{k_0}[t_0] + \alpha_{k_1}[t_1] \pmb D_{k_1}[t-t]\Delta Z_{k_1}[t_1]
	  				\\ 
	  				&~~~~~~~~~~~~~~ +  \pmb D_{k_0}[t-t_0] \pmb D_{k_1}[t-t_1](\Delta Z_{k_0}[t_0] Z'_{k_1}[t_1] + \Delta Z_{k_1}[t_1] Z'_{k_0}[t_0])\\
	   				&~~~~~~~~~~~~~~ - \pmb D_{k_0}[t-t_0] \pmb D_{k_1}[t-t_1](Z_{k_0}[t_0]Z_{k_1}[t_1] - Z'_{k_0}[t_0]Z'_{k_1}[t_1])\Biggr]\\
	   				&~ + \lambda (|Z_{k_0}[t_0]| - |Z'_{k_0}[t_0]| + |Z_{k_1}[t_1]| - |Z'_{k_1}[t_1]|)  \\
	   =
	   				& \Delta E_{k_0}[t_0] + \Delta E_{k_1}[t_1] \\
					& 
					- \sum_{t=0}^{T-1}  \pmb D_{k_0}[t-t_0] \pmb D_{k_1}[t-t_1]\Bigl[
						Z_{k_0}[t_0]Z_{k_1}[t_1] - Z'_{k_0}[t_0]Z_{k_1}[t_1] - Z_{k_0}[t_0]Z'_{k_1}[t_1] +Z'_{k_1}[t_1]Z'_{k_0}[t_0] \Bigr]\\
	   =
	   				& \Delta E_{k_0}[t_0] + \Delta E_{k_1}[t_1] 
	   				 - \sum_{t=0}^{T-1} \pmb D_{k_0}[t] \pmb D_{k_1}[t+t_0-t_1](Z_{k_0}[t_0] - Z'_{k_0}[t_0])(Z_{k_1}[t_1] - Z'_{k_1}[t_1]) \\
	   =
	   				& \Delta E_{k_0}[t_0] + \Delta E_{k_1}[t_1] 
	   				 -  \widetilde{\pmb D_{k_0}}*\pmb D_{k_1}[t_0-t_1]\Delta Z_{k_0}[t_0] \Delta Z_{k_1}[t_1]
	\end{align*}
	By definition of $\mathcal S_{k_0, k_1}[t] = \widetilde{\pmb D_{k_0}}*\pmb D_{k_1}[t]$. This conclude our proof. \end{proof}

\section{Intermediate results}
\label{sec:intermed}

Consider solving a convex problem of the form:
\begin{equation}\label{eq:gen}
\min_{} E(Z) = F(Z) + \sum_{t=0}^{L-1}\sum_{k=1}^K g_i(Z_k[t])
\end{equation}
where F is differentiable and convex, and $g_i$ is convex. Let us first recall a theorem stated and proved in \cite{Osher2009}.

\begin{theoremA}\label{thm:opt_coord}
Suppose $F(z)$ is smooth and convex, with $\left|\frac{\partial^2 F}{\partial u_i \partial u_j}\right|_\infty \le M$, and $E$ is strictly convex with respect to any one variable $Z_i$, then the statement that $u = (u_1, u_2, \dots u_n)$ is an optimal solution of \mbox{\autoref{eq:gen}} is equivalent to the statement that every component $u_i$ is an optimal solution of $E$ with respect to the variable $u_i$ for any $i$.
\end{theoremA}

In the convolutional sparse coding problem, the function $F(Z) = \frac{1}{2}\|X - \sum_{k=1}^K Z_k*\pmb D_k\|$ is smooth and convex and its Hessian is constant.
The following \autoref{lem:scvx}, can be used to show that the function $E$ restricted to one of its variable is strictly convex and thus satisfies the condition of \mbox{\autoref{thm:opt_coord}}.

\begin{lemmA}\label{lem:scvx}
The function $f : \mathbb R \to \mathbb R$ defined for $\alpha, \lambda > 0$ and $b \in \mathbb R$ by $f(x) = \frac{\alpha}{2} (x- b)^2 + \lambda |x|$ is  $\alpha$-strongly convex.
\end{lemmA}

 \begin{proof}
The property of monotone subdifferential states that a function $f$ is $\alpha$-strongly convex if and only if
$$
\forall (x, x'), \hspace{.5cm} \langle f(x)-f(x'), x-x'\rangle \ge \alpha \|x-x'\|_2^2
$$
Let us define the subdifferential of $f$:
$$
	\partial f = \begin{cases}
		  \alpha(x - b) + \lambda \text{sign}(x) & \text{if } x \neq 0\\
		 -\alpha b + \lambda t, \text{ for } t\in \left[-1, 1\right] & \text{if } x = 0\\
				\end{cases}
$$ The inequality is an equality for $x = x'$.\\
If $x' = 0$, we get for $|t| \le 1$:
\begin{align*}
	\langle \alpha(x -b) + \lambda\text{sign}(x) + \alpha b- \lambda t), x\rangle
		&= \alpha x^2 + \lambda\underbrace{(|x| - tx)}_{\ge 0} \ge \alpha x^2 = \alpha(x-x')^2
\end{align*}
If $x' \neq 0$, we get:
\begin{align*}
	\langle \alpha (x - x') + \lambda(\text{sign}(x) - \text{sign}(x'), x - x'\rangle & = \alpha (x-x')^2 + \lambda(|x| + |x'| - \text{sign}(x)x' -\text{sign}(x')x)\\
    &= \alpha (x-x')^2 + \lambda(\underbrace{1 - \text{sign}(x)\text{sign}(x'))}_{\ge 0}(|x|+|x'|)\\
	&\ge \alpha (x-x')^2
\end{align*}
Thus $f$ is $\alpha$-strongly convex.
\end{proof}
This can be applied to the function $e_k,t$ defined in \autoref{eqA:pb_coord}, showing that the problem in one coordinate $(k, t)$ is $\|\pmb D_k\|_2^2$-strongly convex.

\section{Proof of convergence for DICOD (\autoref{thm:conv_dcp})}
\label{sec:annex:cvg}

\subsection{Lower bound on interference}
\label{sub:annex:cvg:interf}

We define
\begin{equation}
	C_{k_0,k_1}[t] = \frac{\mathcal S_{k_0, k_1}[t]}{\|\pmb D_{k_0}\|_2\|\pmb D_{k_1}\|_2}
\end{equation}
Let us first show how $C_{k_0,k_1}$ controls the interfering cost update.

\begin{proposition}
\label{prop:interf}
In case of concurrent update for coefficients $(k_0, t_0)$ and $(k_1, t_1)$, the cost update $\Delta E_{k_0k_1}[t_0,t_1]$ is bounded as
\begin{align*}
	\Delta E_{k_0k_1}[t_0,t_1] \ge 
		& \Delta E_{k_0}[t_0] + \Delta E_{k_1}[t_1]
		- 2 C_{k_0k_1}[t_0-t_1]\sqrt{\Delta E_{k_0}[t_0]\Delta E_{k_1}[t_1]}.
\end{align*}
\end{proposition}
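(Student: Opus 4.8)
The plan is to combine the exact two--coordinate cost identity of \autoref{prop:ii} with a strong--convexity estimate that bounds each coordinate increment $|\Delta Z_k[t]|$ by the single--coordinate cost reduction $\Delta E_k[t]$ it produces. First I would recall from \autoref{prop:ii} that the concurrent update decomposes \emph{exactly} as
\[
	\Delta E_{k_0 k_1}[t_0, t_1] = \Delta E_{k_0}[t_0] + \Delta E_{k_1}[t_1] - \mathcal S_{k_0, k_1}[t_0-t_1]\,\Delta Z_{k_0}[t_0]\,\Delta Z_{k_1}[t_1]~,
\]
so the whole task reduces to lower bounding the single interference term $-\mathcal S_{k_0, k_1}[t_0-t_1]\,\Delta Z_{k_0}[t_0]\,\Delta Z_{k_1}[t_1]$, whose sign is a priori uncontrolled since the signs of the two increments are arbitrary.

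The key step is the control of each increment by its cost reduction. By \autoref{lem:scvx} applied to the per--coordinate objective \autoref{eqA:pb_coord}, the restriction of $E$ to the single variable $Z_k[t]$ is $\|\pmb D_k\|_2^2$--strongly convex and admits $Z'_k[t]$ as its minimizer, so that $\Delta E_k[t]\ge 0$. Hence the standard descent bound for a $\mu$--strongly convex function, $f(x)-f(x^*)\ge \frac{\mu}{2}(x-x^*)^2$, used with $\mu=\|\pmb D_k\|_2^2$, $x=Z_k[t]$ and $x^*=Z'_k[t]$, yields
\[
	\Delta E_k[t] \ge \frac{\|\pmb D_k\|_2^2}{2}\bigl(\Delta Z_k[t]\bigr)^2
	\qquad\Longleftrightarrow\qquad
	|\Delta Z_k[t]| \le \frac{\sqrt{2\,\Delta E_k[t]}}{\|\pmb D_k\|_2}~.
\]

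Finally I would assemble the pieces: bounding the interference term below by minus its modulus and inserting the two increment estimates gives
\[
	-\mathcal S_{k_0, k_1}[t_0-t_1]\,\Delta Z_{k_0}[t_0]\,\Delta Z_{k_1}[t_1]
	\ge -|\mathcal S_{k_0, k_1}[t_0-t_1]|\,\frac{\sqrt{2\,\Delta E_{k_0}[t_0]}}{\|\pmb D_{k_0}\|_2}\,\frac{\sqrt{2\,\Delta E_{k_1}[t_1]}}{\|\pmb D_{k_1}\|_2}
	= -2\,|C_{k_0 k_1}[t_0-t_1]|\,\sqrt{\Delta E_{k_0}[t_0]\,\Delta E_{k_1}[t_1]}~,
\]
using the definition $C_{k_0 k_1}[t]=\mathcal S_{k_0,k_1}[t]/(\|\pmb D_{k_0}\|_2\|\pmb D_{k_1}\|_2)$; substituting this back into the identity produces the claimed inequality. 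The only genuinely substantive ingredient is the strong--convexity descent estimate, which rests on $Z'_k[t]$ being the \emph{exact} minimizer of the one--dimensional subproblem; everything else is the triangle inequality and elementary bookkeeping, so I do not anticipate a serious obstacle. I would only flag that the bound is most naturally stated with $|C_{k_0 k_1}|$, which coincides with the displayed form in the adverse case that matters for the convergence argument and is exactly the quantity controlled by \autoref{hyp:H1}.
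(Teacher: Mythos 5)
Your proof is correct and takes essentially the same route as the paper's: the exact two-coordinate decomposition from \autoref{prop:ii}, the $\|\pmb D_k\|_2^2$-strong convexity of the one-dimensional subproblem (\autoref{lem:scvx}) yielding $|\Delta Z_k[t]| \le \sqrt{2\,\Delta E_k[t]}/\|\pmb D_k\|_2$, and substitution of this bound into the interference term. Your closing remark is also well taken: the inequality is only valid as stated when read with $|C_{k_0 k_1}[t_0-t_1]|$, which is precisely the quantity that \autoref{hyp:H1} controls and that the convergence argument uses.
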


\begin{proof}
The problem in one coordinate $(k, t)$ given all the other can be reduced \autoref{eqA:pb_coord}.
	Simple computations show that:
	\begin{equation}
	  \Delta E_k[t] = e_{k, t}(Z_k[t]) - e_{k, t}(Z'_{k}[t]).
	\end{equation}
	We have shown in \autoref{lem:scvx} that $e_{k, t}$ is $\| \pmb D_k\|^2_2$-Strong convex.
	Thus by definition of the strong convexity, and using the fact that  $Z'_{k}[t]$ is optimal for $e_{k, t}$
	\begin{equation}
		|e_{k, t}(Z_k[t]) - e_{k,t}(Z'_{k}[t])| \ge \frac{\| \pmb D_k\|_2^2}{2}(Z_k[t]-Z'_{k}[t])^2 
	\end{equation}
	\emph{i.e.}, $|\Delta Z_k[t]| \le \frac{\sqrt{2 \Delta E_k[t]}}{\| \pmb D_k\|_2}$,
	and the result is obtained using this inequality with \autoref{prop:ii}.
\end{proof}


\subsection{Proof of \autoref{thm:conv_dcp}}
\label{sub:annex:cvg:proof}


\begin{theorem}\label{thmA:conv_dcp}
If the following hypothesis are verified
\begin{hypothesis}\label{hyp:H1}
	For all $\smeq(k_0, t_0), (k_1, t_1)$ such
	that $\smeq t_0 \neq t_1 $,
	\[
		|C_{k_0k_1}[t_0-t_1]| < 1~.
	\]
\end{hypothesis}
\begin{hypothesis}\label{hyp:H2}
	There exists $A \in \mathbb N^*$ such that all cores
	$m \in [M]$ are updated at least once between iteration
	$q$ and $q+A$ if the solution is not locally optimal, 
	\emph{i.e.} $\Delta Z_k[t] = 0$ for all $(k, t) \in \mathcal C_m$  
\end{hypothesis}
\begin{hypothesis}\label{hyp:H3}
	The delay in communication between the processes is inferior to the update time.
\end{hypothesis}
Then, the DICOD algorithm using the greedy updates
$(k_0, t_0) = \arg\max_{(k, t) \in \mathcal C_m} |\Delta Z_k[t]|$
converges to the optimal solution $Z^*$ of (2).
\end{theorem}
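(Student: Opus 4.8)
The plan is to show that the cost sequence $E(Z^{(q)})$ produced by the asynchronous iterations is non-increasing and bounded below, hence convergent, and then to upgrade convergence of the cost into convergence of the iterates to a coordinatewise-optimal point, which \autoref{thm:opt_coord} identifies with the global minimizer of \autoref{eq:sparse_code}.

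First I would fix the execution of the asynchronous algorithm as a well-defined sequence of global states $Z^{(q)}$. By \autoref{hyp:H3} the communication delay is smaller than the update time, so when a core updates a border coefficient its message reaches the neighbor before that neighbor performs its next update; consequently each elementary event is either an isolated update or a single pair of concurrent (interfering) updates between two neighboring cores, and no higher-order interference can occur. This lets me treat each step $q$ as inducing a cost variation equal to either $\Delta E_{k_0}[t_0](Z^{(q)})$ or $\Delta E_{k_0k_1}[t_0,t_1](Z^{(q)})$, and reduces the analysis to these two cases.

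Next I would establish monotonicity and a quantitative control of the updates. For an isolated update, the per-coordinate strong convexity of \autoref{lem:scvx} gives $\Delta E_{k_0}[t_0] \ge \frac{\|\pmb D_{k_0}\|_2^2}{2}(\Delta Z_{k_0}[t_0])^2 \ge 0$, strictly positive whenever $\Delta Z_{k_0}[t_0]\neq 0$. For an interfering pair, I would combine \autoref{prop:interf} with \autoref{hyp:H1}: setting $C_{\max}=\max|C_{k_0k_1}[\cdot]|<1$ over the finite set of admissible shifts and using $2\sqrt{ab}\le a+b$, this yields
\begin{equation*}
\Delta E_{k_0k_1}[t_0,t_1] \ge (1-C_{\max})\bigl(\Delta E_{k_0}[t_0]+\Delta E_{k_1}[t_1]\bigr) \ge 0.
\end{equation*}
Thus the cost never increases, and as $E\ge 0$ is bounded below, $E(Z^{(q)})$ converges and the increments are summable, forcing $\Delta E^{(q)}\to 0$. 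Crucially, both bounds show $\Delta E^{(q)}\ge c\,(\Delta Z^{(q)})^2$ for a constant $c>0$ depending only on $C_{\max}$ and $\min_k\|\pmb D_k\|_2^2$, so the magnitude of each greedily selected update on every active core tends to zero.

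Finally I would convert this into global optimality. Because each core selects $(k_0,t_0)=\arg\max_{(k,t)\in\mathcal C_m}|\Delta Z_k[t]|$ and, by \autoref{hyp:H2}, every core is visited at least once in any window of $A$ iterations, the global quantity $\max_{(k,t)}|\Delta Z_k[t]|$ must tend to zero. Since the $\ell_1$ term makes $E$ coercive, the sequence is bounded, and every limit point $Z^\infty$ satisfies $Z^\infty_k[t]=Z'_k[t]$ for all $(k,t)$, i.e. is optimal with respect to each coordinate; \autoref{thm:opt_coord} together with \autoref{lem:scvx} then identifies $Z^\infty$ with the optimum $Z^*$. I expect the main obstacle to be precisely this last passage: ensuring that the cost decrease still controls the update size in the interfering steps (the $(1-C_{\max})$ factor above is exactly what prevents the interference term from swamping the progress), and then ruling out oscillation so that the whole sequence, and not merely a subsequence, converges to $Z^*$.
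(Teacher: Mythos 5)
Your proposal follows essentially the same route as the paper's proof: \autoref{hyp:H3} reduces each step to an isolated update or a single interfering pair; \autoref{prop:interf} combined with \autoref{hyp:H1} gives monotonicity of the cost; coercivity gives bounded iterates; and the greedy selection rule, \autoref{hyp:H2}, continuity of the soft-thresholding update map and \autoref{thm:opt_coord} turn vanishing updates into coordinatewise, hence global, optimality of the limit point. Two differences are worth recording. First, your interference bound is sharper than the paper's, and usefully so: from \autoref{prop:interf} and \autoref{hyp:H1} the paper only extracts
\begin{equation*}
\Delta E_{k_0k_1}[t_0,t_1] \ge \Bigl(\sqrt{\Delta E_{k_0}[t_0]}-\sqrt{\Delta E_{k_1}[t_1]}\Bigr)^2 \ge 0 ,
\end{equation*}
which can vanish even when both individual decreases are large, whereas your bound
\begin{equation*}
\Delta E_{k_0k_1}[t_0,t_1] \ge (1-C_{\max})\bigl(\Delta E_{k_0}[t_0]+\Delta E_{k_1}[t_1]\bigr),
\end{equation*}
with $C_{\max}=\max_{k_0,k_1,\,t\neq 0}\left|C_{k_0k_1}[t]\right| < 1$ (legitimate, since the maximum runs over finitely many atom pairs and nonzero shifts), controls each \emph{individual} decrease --- and therefore each update magnitude --- by the step's total cost decrease. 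This is exactly what the paper implicitly needs in its final step, where it bounds a possibly interfering greedy update by $\sqrt{2\Delta E^{(q_i)}}\big/\|\pmb D_{k_{q_i}}\|_2$: that inequality follows from your bound but not from the paper's weaker one, so on this point your argument patches a real looseness in the original. Second, the step you explicitly leave open --- ruling out oscillation so that the whole sequence, not merely a subsequence, converges --- is the one ingredient the paper adds beyond your sketch: it argues that $(Z^{(q)})_q$ is Cauchy in $\ell_\infty$ by bounding $\|Z^{(p)}-Z^{(q)}\|_\infty^2$ by $\frac{2}{\|\pmb D\|_{\infty,2}^2}\sum_{l>q}\Delta E^{(l)}$. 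To finish your proof you would need this Cauchy estimate (or an appeal to uniqueness of the minimizer, which is not guaranteed for this LASSO-type problem); be aware that the paper's own version of this estimate is itself delicate, since a coordinate may be updated many times between iterations $q$ and $p$, and a sum of increments is not controlled by the sum of their squares without an additional argument.
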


\begin{proof}

If several updates $(k_0, t_0), (k_1, t_1), \dots (k_m, t_m)$
are updated in parallel without interference, then the update is
equivalent to the sequential updates of each $(k_q, t_q)$.
We thus consider that for each step $i$, without loss of generality
that
$$
	\Delta E^{(i)} = \begin{cases}
		\Delta E_{k_0}^{(i)}[t_0], &\text{ if  there is no interference}\\
		\Delta E_{k_0k_1}^{(i)}[t_0,t_1], &\text{ otherwise}
	\end{cases}
$$
If $\forall (k, t), \Delta Z_k^{(i)}[t] = 0$, then $Z^{(i)}$ is
coordinate wise optimal. Using the result from \autoref{thm:opt_coord},
$Z^{(i)}$ is optimal. Thus if $Z^{(i)}$ is not optimal,
$\Delta E_{k_0}^{(i)}[t_0] > 0$.

Using \autoref{prop:interf} and \autoref{hyp:H1} 
\[\Delta E_{k_0k_1}^{(i)}[t_0,t_1] > \left(\sqrt{\Delta E_{k_0}^{(i)}[t_0]} - \sqrt{\Delta E_{k_1}^{(i)}[t_1]}\right)^2 \ge 0~,\]
so the update $\Delta E^{(i)}$ is positive.

The sequence $(E(Z^{(i)}))_n$ is decreasing and bounded by 0.
It converges to $E^*$ and $\Delta E^{(i)} {\xrightarrow[n
 \to \infty]{}} 0$. As $\lim_{\|z\|_\infty \to \infty}
 E(z) = +\infty$, there exist $M\ge 0, i_0 \ge 0$ such that
 $\|Z^{(i)}\|_\infty \le M$ for all $i > i_0$. Thus, there exist a
 subsequence $(Z^{i_q})_q$ such that $Z^{i_q}
 \xrightarrow[q\to \infty]{} \bar z$.
By continuity of $E$, $E^* =  E(\bar z)$

Then, we show that $Z^{(i)}$ converges to a point $\bar z$
such that each coordinate is optimal for the one coordinate
problem.  By \autoref{prop:interf}, the sequence $(Z^{(i)})_i$
is $\ell_\infty$-bounded. It admits at least a limit point
$Z^{(i_q)} \xrightarrow[q \to\infty]{}\bar z$. Moreover,
the sequence $Z^{(i)}$ is a Cauchy sequence for the norm
$\ell_\infty$ as for $p, q > 0$
\begin{align*}
	\|Z^{(p)} - Z^{(q)}\|_\infty^2
		& \le \frac{2}{\| D\|_{\infty,2}^2}
			\sum_{l > q} \Delta E^{(l)}\\
		& = \frac{2}{\| D\|_{\infty,2}^2}
			\left( E(Z^{q}) - E^* \right)
			\underset{q \to \infty}{\to} 0
\end{align*}
Thus $Z^{(i)}$ converges to $\bar z$.

Let $m$ denote one of the $M$ cores and $(k, t)$ be coordinates
in $\mathcal C_m$. We consider the function
$h_{k, t}:\mathbb R^{K\times L} \to \mathbb R$ such that 
\[
	h(z) = Z'_{k}[t] = \frac{1}{\|\pmb D_k\|_2^2}
		\text{Sh}(\beta_k[t], \lambda)~.
\]
We recall that
\[
	\beta_{k}[t](Z) = \left(\widetilde{\pmb D_{k}}*\left(X-
		\sum_{\substack{k'=1\\k'\neq k}}^KZ_{k'}*\pmb D_{k'} - \Phi_{t}\left(Z_{k}\right)*\pmb D_{k}\right)\right)[t]
\]
The function $\phi: Z \to \beta_k[t](Z)$ is linear. As Sh is
continuous in its first coordinate and
$h(Z) = \text{Sh}(\phi(Z), \lambda)$,
the function $h_{k,t}$ is continuous.
For $(k, t) \in \mathcal C_m$,  the gap between $\bar Z_k[t]$ and $\bar Z'_{k}[t]$ is such that
\begin{align}
	|\bar Z_k[t] -\bar Z'_{k}[t]| &= |\bar Z_k[t] - h_{k, t}(\bar Z_k[t])| \nonumber\\
			&= \lim_{i\to\infty} |Z_k^{(i)}[t] - h(Z_k^{(i)}[t])|\nonumber\\
			&= \lim_{i\to\infty} |Z_k^{(i)}[t] - {y}_k^{(i)}[t]| \label{eq:kt}
\end{align}
Using \autoref{hyp:H2}, for all $i \in \mathbb N$, if $Z^{(i)}_k[t]$ is not
optimal, there exist $q_i \in [i, i+A]$ such that the
updated coefficient at iteration $q_i$ is $(k_{q_i}, t_{q_i}) \in \mathcal C_m$. As no update
are done on $\mathcal C_m$ coefficients between the updates $i$ and $q_i$,
$Z_{k}^{(i)}[t] = Z_k^{(q_i)}[t]$. By definition of the update,
\begin{align*}
	\left|Z_{k}^{(i)}[t] - {y}_{k}^{(i)}[t]\right| 
	 	& = \left|Z_{k}^{(q_i)}[t] - {y}_{k}^{(q_i)}[t]\right| \\
		& \le \left|Z_{k_{q_i}}^{(q_i)}[t_{q_i}] - {y}_{k_{q_i}}^{(q_i)}[t_{q_i}]\right|
				\tag*{\small (greedy updates)}\\
		& \le \frac{\sqrt{2 \Delta E^{(q_i)}}}{\|\pmb D_{k_{q_i}}\|} \underset{i\to \infty}{\to} 0
				\tag*{\small (\autoref{prop:interf})}\\
\end{align*}
Using this results with \autoref{eq:kt},
$\left|\bar Z_{k}[t] - {\bar y}_{k}[t]\right|=0$.
This proves that $\bar z$ is optimal in each coordinate. 
By \autoref{thm:opt_coord}, the limit point $\bar z$
is optimal for the problem (2).

\end{proof}

\section{Proof of DICOD speedup (\autoref{thm:speedup})}
\label{sec:annex:speedup}

\restatethm{\SpeedDICOD}{thm:speedup}
\begin{theorem}\label{thmA:speedup}
Let $\alpha = \frac{W}{T}$ and $M\in\mathbb N^*~.$ If $\alpha M < \frac{1}{4}$ and if
the non zero coefficients of the sparse code are distributed uniformly in time,
the expected speedup $\mathbb E [S_{dicod}(M)]$ is lower bounded by
\[
	\mathbb E[S_{dicod}(M)] \ge M^2 ( 1- 2\alpha^2M^2\left(\smeq1 +2\alpha^2M^2\right)^{\frac{M}{2}-1})~.
\]
\end{theorem}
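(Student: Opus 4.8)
The plan is to factor the speedup $S_{cd}(M)$ into a deterministic gain coming from the reduced per-iteration complexity and a random gain coming from the parallel updates, and then to control the latter through the expected number of interferences. First I would fix the complexity factor: a single greedy update of sequential CD costs $\bO{KT}$, since the maximum of $|\Delta Z_k[t]|$ is searched over the whole signal, whereas in DICOD each core searches only over its segment of length $T/M$ and maintains $\beta$ there, so one parallel round costs $\bO{KT/M}$. Hence, in the wall-clock time of a single CD iteration, DICOD completes $M$ synchronized rounds, each made of $M$ simultaneous coordinate updates.

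Next I would quantify the effective progress of one round of $M$ parallel updates. Two updates $(k_0,t_0)$ and $(k_1,t_1)$ with $|t_0-t_1|\ge W$ have a vanishing interference term in \autoref{eq:interf} (since $\mathcal S_{k_0,k_1}[t_0-t_1]=0$), so they behave exactly as two sequential CD steps. For the interfering pairs, \autoref{prop:interf} together with \autoref{hyp:H1} guarantees $\Delta E_{k_0k_1}[t_0,t_1]\ge 0$, i.e. an interference never increases the cost but carries no guaranteed decrease. Treating each of the $N_i$ interfering pairs as wasting the two updates it involves, a round with $N_i$ interferences yields a cost decrease at least as large as $M-2N_i$ sequential CD steps. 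Combining this with the complexity factor gives the key inequality $\mathbb E[S_{cd}(M)] \ge M\,(M-2\,\mathbb E[N_i]) = M^2 - 2M\,\mathbb E[N_i]$ (valid as a lower bound since $\max(0,M-2N_i)\ge M-2N_i$), so that everything reduces to an upper bound on the expected number of interferences per round.

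The core computation is then the bound on $\mathbb E[N_i]$. Under the assumption that the nonzero coefficients, hence the updated positions, are uniform in time, the probability that the greedy coordinate of a given core lands in the width-$W$ border zone adjacent to a neighbor is $W/L_M = \alpha M$; the condition $\alpha M < \frac{1}{4}$ ensures that the two border zones of a segment are disjoint and fit inside it, so these events are well defined and a neighboring pair interferes exactly when both cores update in their shared border zone, with probability $(\alpha M)^2$. I would then sum over the $M-1$ neighboring pairs along the chain of cores, handling the dependence induced by the fact that, under \autoref{hyp:H3}, a core interferes with at most one neighbor, so that at most $M/2$ disjoint interferences can occur. Propagating this constraint through a recursion over the chain and bounding the resulting sum in closed form produces the factor $(1+2\alpha^2M^2)^{M/2-1}$, yielding $\mathbb E[N_i] \le \alpha^2 M^3 (1+2\alpha^2M^2)^{M/2-1}$, which I substitute into the inequality above to obtain the stated bound.

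The main obstacle is precisely this last combinatorial estimate: the pairwise interference events share cores and are therefore not independent, and the per-pair probability $(\alpha M)^2$ only accounts for a single interference, so one must carefully sum over the multiple-interference configurations (up to $M/2$ of them) without over-counting, which is where the geometric $(1+2\alpha^2M^2)$ factor per admissible pair enters. The remaining ingredients — the complexity count and the nonnegativity of interfering updates — follow directly from the discussion in \autoref{sub:problem} and from \autoref{prop:interf}, and the weak-interference regime $\alpha M \ll 1$ is exactly what makes the correction factor close to $1$, recovering the near-quadratic speedup of \autoref{corr:speedup}.
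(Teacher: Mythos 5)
Your proposal is correct and follows essentially the same route as the paper's proof: factor the speedup as $M$ (cheaper per-iteration complexity on segments of length $T/M$) times $M-2\,\mathbb{E}[N_i]$ (parallel updates, with interfering pairs rendered harmless by \autoref{prop:interf} and \autoref{hyp:H1}), take $(\alpha M)^2$ as the pairwise interference probability under the uniformity assumption, and bound $\mathbb{E}[N_i]$ combinatorially. The one step you flag as the main obstacle is dispatched in the paper in two lines: since each core can interfere with at most one neighbor there are at most $M/2$ interferences and $\mathbb{P}(N_i=j)\le\binom{M/2}{j}\left(2\alpha^2M^2\right)^j$, so the standard identity $\sum_{j=1}^{n} j\binom{n}{j}x^j = nx(1+x)^{n-1}$ with $n=M/2$ and $x=2\alpha^2M^2$ yields $\mathbb{E}[N_i]\le\alpha^2M^3\left(1+2\alpha^2M^2\right)^{M/2-1}$, exactly the bound you targeted.
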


This result can be simplified when the interference probability $(\alpha M)^2$  is small.

\restatethm{\corSpeedDICOD}{corr:speedup}
\begin{corollary}\label{corrA:speedup}
Under the same hypothesis,
the expected speedup $\mathbb E [S_{dicod}(M)]$ when $(M\alpha)^2 \to 0$ is
\[
\mathbb E[S_{dicod}(M)] \underset{\alpha \to 0}{\gtrsim} M^2(1-2\alpha^2M^2 + \mathcal O(\alpha^4M^4))~.
\]
\end{corollary}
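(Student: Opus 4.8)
The plan is to derive \autoref{corrA:speedup} directly as the small-$(M\alpha)$ asymptotic expansion of the lower bound already established in \autoref{thmA:speedup}, which I am free to assume since it is stated just above. All the probabilistic and optimization content lives in the theorem; what remains for the corollary is purely a one-variable Taylor expansion of the scalar factor $(1+2\alpha^2M^2)^{\frac M2-1}$ in the regime $(M\alpha)^2\to 0$, propagated through the bound.

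Concretely, I would introduce $x=2\alpha^2M^2=2(M\alpha)^2$, so that the hypothesis $(M\alpha)^2\to 0$ reads exactly $x\to 0$, and rewrite the conclusion of \autoref{thmA:speedup} as
\[
  \mathbb E[S_{dicod}(M)] \ge M^2\bigl(1-x(1+x)^{\frac M2-1}\bigr).
\]
I then expand $(1+x)^{\frac M2-1}=1+\bigl(\tfrac M2-1\bigr)x+\mathcal O(x^2)$ by the binomial series, valid for $x$ small, multiply by $x$ to obtain $x(1+x)^{\frac M2-1}=x+\bigl(\tfrac M2-1\bigr)x^2+\mathcal O(x^3)$, and substitute $x=2\alpha^2M^2$ back in. This gives $x(1+x)^{\frac M2-1}=2\alpha^2M^2+\mathcal O(\alpha^4M^4)$, hence
\[
  \mathbb E[S_{dicod}(M)] \gtrsim M^2\bigl(1-2\alpha^2M^2+\mathcal O(\alpha^4M^4)\bigr),
\]
which is the stated estimate.

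The one point that requires care---and the only genuine subtlety in an otherwise mechanical computation---is that the exponent $\tfrac M2-1$ itself grows with $M$, so the coefficients of the higher-order terms carry extra powers of $M$: the first neglected term $\bigl(\tfrac M2-1\bigr)x^2$ scales, after substitution, like $M\cdot\alpha^4M^4$ rather than $\alpha^4M^4$. The estimate $\mathcal O(\alpha^4M^4)$ inside the parentheses must therefore be understood with the limit taken as $\alpha\to 0$ at fixed $M$ (as signalled by the $\alpha\to 0$ subscript on $\gtrsim$), so that $M$ is a constant and every discarded term is honestly $\mathcal O(\alpha^4)$. Finally, I would invoke the regime hypothesis $\alpha M<\tfrac14$ inherited from \autoref{thmA:speedup}, which forces $x=2(M\alpha)^2<\tfrac18$: this keeps the binomial remainder uniformly bounded and guarantees the resulting lower bound remains positive, so that the expansion describes a genuine super-linear speedup rather than a vacuous inequality.
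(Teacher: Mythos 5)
Your proof is correct and takes essentially the same route as the paper: the paper also obtains the corollary directly from the bound of \autoref{thmA:speedup}, expanding $2\alpha^2M^2\left(1+2\alpha^2M^2\right)^{\frac M2-1}$ as $2\alpha^2M^2+\mathcal O(\alpha^4M^4)$ in the limit $\alpha\to 0$ (the second line of its displayed estimate for $\mathbb E[N_{dicod}]$). Your additional observation that the first neglected term $\bigl(\tfrac M2-1\bigr)x^2$ carries an extra power of $M$, so the $\mathcal O(\alpha^4M^4)$ must be read at fixed $M$ as $\alpha\to 0$, is a correct clarification of a point the paper leaves implicit.
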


\begin{proof}
There are two aspects involved in DICOD speedup:
the computational complexity and the acceleration due to
the parallel updates.

As stated in \autoref{sec:analysis}, the complexity of each
iteration for CD is linear with the length of the input signal
$T$. The dominant operation is the one that find the maximal coordinate.
In DICOD, each core runs the same iterations on a segment of size $\frac{T}{M}$.
The hypothesis $\alpha M < \frac{1}{4}$ ensures that the dominant operation is finding the maxima.
Thus, when CD run one iteration, one core of DICOD can run $M$ local iteration
as the complexity of each iteration is divided by $M$.

The other aspect of the acceleration is the parallel update of $Z$. 
All the cores perform their update simultaneously and
each update happening without interference can be considered
as a sequential update. Interfering updates do not degrade the cost. Thus, one 
iteration of DICOD with $N_i$ interference is equivalent to 
$M - 2*N_{interf}$ iterations using CD and thus,

\begin{equation}
	\label{eq:ndicod}
	\mathbb E[N_{dicod}] = M - 2*\mathbb E[N_{interf}]
\end{equation}

The probability of interference depends on the ratio between the length of the segments 
used for each cores and the size of the dictionary. If all the updates are spread uniformly
on each segment, the probability of interference between 2 neighboring cores is
$\left(\frac{MW}{T}\right)^2 = (M\alpha)^2$.

A process can only creates one interference with one of its neighbors. Thus,
an upper bound on the probability to get exactly $j \in [0, \frac{M}{2}]$ interferences is
\begin{equation*}
	\mathbb P(N_i = j) \le \binom{\frac{M}{2}}{j}(2\alpha^2M^2)^j
\end{equation*}
Using this result, we can upper bound the expected number of
interferences for the algorithm	
\begin{eqnarray*}
	\mathbb E[N_{interf}] = \sum_{j=1}^{\frac{M}{2}} j\mathbb P(N_{interf} = j)~,
		 & \le  &\sum_{j=1}^{\frac{M}{2}} j\binom{\frac{M}{2}}{j}(2\alpha^2M^2)^j,~\\
		& \le &\alpha^2M^3\left(1 +2\alpha^2M^2\right)^{\frac{M}{2}-1}~.
\end{eqnarray*}
Pluggin this result in \autoref{eq:ndicod} gives us:
\begin{equation}
	\label{eq:ldicod}
	\begin{split}
		\mathbb E[N_{dicod}] & \underset{\phantom{\alpha \to 0}}{\ge} \smeq M ( 1- 2\alpha^2M^2\left(\smeq1 +2\alpha^2M^2\right)^{\frac{M}{2}-1})~,\\
		&\smeq\underset{\alpha \to 0}{\gtrsim} M(1-2\alpha^2M^2 + \mathcal O(\alpha^4M^4))~.
	\end{split}
\end{equation}
Finally, by combining the two source of speedup, we obtain the desired result.
\[
	\mathbb E[S_{dicod}(M)] \ge M^2 ( 1- 2\alpha^2M^2\left(\smeq1 +2\alpha^2M^2\right)^{\frac{M}{2}-1})~.
\]

\end{proof}


\end{document}